\documentclass[10pt]{article} 
\usepackage[preprint]{tmlr}

\usepackage[utf8]{inputenc} 
\usepackage[T1]{fontenc}    
\usepackage{hyperref}       
\usepackage{url}            
\usepackage{booktabs}       
\usepackage{amsfonts}       
\usepackage{nicefrac}       
\usepackage{microtype}      
\usepackage{xcolor}         
\usepackage{amsmath}
\usepackage{amssymb}
\usepackage{mathtools}
\usepackage{physics}
\mathtoolsset{showonlyrefs=true}
\usepackage[table]{xcolor}
\usepackage{enumitem}

\usepackage{amsthm}
\usepackage{graphicx}
\usepackage{sidecap}
\usepackage{commath}
\usepackage{float}
\graphicspath{{./FIG/}}
\DeclareGraphicsExtensions{.eps,.png,.jpg,.jpeg,.bmp,.gif,.pdf}
\usepackage{wrapfig}
\usepackage{caption}
\usepackage[makeroom]{cancel}
\usepackage{tocloft}

\usepackage{comment}

\theoremstyle{plain}
\newtheorem{theorem}{Theorem}[section]
\newtheorem{proposition}[theorem]{Proposition}

\theoremstyle{remark}
\newtheorem{remark}[theorem]{Remark}

\title{Learning infinite context windows in recurrent architectures via spatial neural computing}

\author{\name Aleix Salvador-Pomarol \email asalvadorpomarol@ucsd.edu \\
  \addr UC San Diego
  \AND
  \name Arthur N. Montanari \email arthur.montanari@northwestern.edu \\
  \addr Northwestern University
  \AND
  \name Earl K. Miller \email ekmiller@mit.edu \\
  \addr Massachusetts Institute of Technology
  \AND
  \name Adilson E. Motter \email motter@northwestern.edu \\
  \addr Northwestern University
  \AND
  \name Jorge Cort\'es \email jcortes@ucsd.edu \\
  \addr UC San Diego}

\begin{document}

\maketitle

\begin{abstract}
  Recurrent neural networks (RNNs) offer linear-time scaling with sequence length while requiring only constant memory, yet they struggle to capture long-range dependencies due to vanishing gradients and limited receptive fields. To address these limitations, we introduce a second-order recurrent model in which the standard neuron-to-neuron communication is replaced by a spatially evolving field governed by (discretized) partial differential equations. Drawing inspiration from the role of cortical waves in brain computation, this mechanism allows structured spatiotemporal patterns to serve as an implicit, high-capacity memory. We show that the resulting model is equivalent to a structured infinite-order RNN in which the current state depends explicitly on its entire history of past states, yielding an effectively unbounded receptive field with a fixed number of parameters. We further derive constructive conditions to ensure marginal stability, constraining the gradient spectrum on the unit circle and thereby eliminating vanishing and exploding gradients. Empirically, the proposed architecture outperforms other recurrent models on long-horizon benchmarks while using substantially fewer parameters, demonstrating that spatial dynamics can effectively bridge the gap between efficient inference and long-term memory.
%
\end{abstract}

\section{Introduction}
For decades, recurrent neural networks (RNNs) have served as the cornerstone of sequence modeling. These models rely on hidden states that are recurrently updated to process sequential inputs, acting as a dynamical memory of past information. In contrast to stateless architectures, RNNs offer the theoretical promise of compressing an unbounded context into a fixed-size state, resulting in $O(1)$ memory cost with respect to the sequence length $T$ and $O(T)$ computational complexity.
However, the practical efficacy of classical RNNs is severely constrained by the vanishing and exploding gradient problem~\citep{bengio1994learning}. From a dynamical systems perspective, this issue stems from the spectrum of the recurrence matrix. When its eigenvalues are stable ($|\lambda|<1$) or unstable ($|\lambda|>1$), perturbations respectively decay or grow exponentially, leading to a loss of long-term dependencies during training. While unitary and orthogonal RNNs~\citep{arjovsky2016unitary,ceni2025random,henaff2016recurrent, kerg2019non,le2015simple} can mitigate this problem by constraining the eigenvalues to be \textit{marginally stable} ($|\lambda|=1$), the nonlinearities of the activation functions perturb the effective dynamics away from the unit circle, reintroducing vanishing/exploding gradients. Moreover, such constraints limit the model expressivity and convergence speed~\citep{kerg2019non,ribeiro2020beyond}.
To bypass these spectral issues, architectures such as the long short-term memory (LSTM)~\citep{hochreiter1997long} and gated recurrent unit (GRU)~\citep{cho2014learning} introduce gating mechanisms with linear gradient updates.
These additive gates prevent gradients from exponentially decaying due to repetitive multiplicative operations, but they also come at the cost of increased architecture complexity and parameter count, leading to higher computational overhead \citep{chung2014empirical}, limited parallelization \citep{li2018independently}, and implicit timescale biases \citep{neil2016phased}.

Recently, the transformer architecture has largely supplanted RNNs due to its ability to model long-range dependencies via self-attention.
However, the transformer’s supremacy comes at a steep cost:  the key-value memory grows linearly, $\mathcal O(T)$, during inference and the attention mechanism scales quadratically, $\mathcal O(T^2)$ \citep{kwon2023efficient,vaswani2017attention}. These scaling properties effectively reintroduce the ``von Neumann gap'' in modern artificial intelligence (AI) as a significant portion of inference time and energy is spent on memory access rather than computation \citep{dao2022flashattention}. 
This computational bottleneck has renewed interest in dynamical-systems-based architectures \citep{sieber2024understanding}, such as state-space models (SSMs)~\citep{gu2024mamba,gu2021efficiently, somvanshi2025s4} and oscillator networks~\citep{effenberger2025functional,keller2023neural,keller2023traveling,rusch2024oscillatory,rusch2022graph}. 
Although these approaches improve memory-computation trade-offs, they typically rely on dissipative dynamics. As a result, they do not provide mechanisms to maintain nonvanishing gradients over \textit{arbitrarily long horizons} \citep{gu2024mamba,rusch2020coupled}. 
Moreover, controlling memory-retention and forgetting mechanisms in these models remains challenging, often requiring increased parameterization or complex hybrid architectures to achieve sufficient expressivity~\citep{karuvally2025bridging,wang2025comprehensive}. 

Taken together, these limitations raise a fundamental question: \textit{What structural features allow recurrent architectures to maintain nonvanishing gradient modes over long horizons despite input-dependent nonlinear perturbations?}  Through a spectral analysis of the gradient dynamics, we show that the answer lies in \textit{how} the nonlinearity enters the gradient equations, rather than in the spectrum of the recurrence matrices. In conventional first-order recurrent architectures (e.g., RNNs), these perturbations directly affect the coefficients that control modal decay, making robust marginal stability impossible regardless of the parameterization. In contrast, a second-order recurrent architecture can have additional, input-independent coefficients that selectively fix the magnitude of gradient modes to enforce marginal stability while allowing inputs to perturb the remaining dynamics. Such a separation arises naturally, for example, in wave propagation dynamics, since its inertial term is determined by the medium itself rather than by the input signal driving it. This connection suggests that wave dynamics may be potentially used as a natural computational substrate for memory retention\textemdash an idea further motivated by increasing evidence on the role of cortical traveling waves in encoding, maintaining, and transforming information through distributed spatiotemporal activity patterns~\citep{benigno2023waves,davis2021spontaneous,ermentrout2001traveling,singer2021recurrent}.

In this paper, we propose a \textit{spatial computing model}, called SpatialRNN, in which recurrent neuron-to-neuron communication is mediated through a spatially evolving medium governed by (discretized) partial differential equations (PDEs). This mechanism allows structured spatiotemporal patterns to be encoded and processed directly in the state of the medium, providing a dynamical substrate for retaining long-term temporal dependencies.
In fact, we rigorously prove that the SpatialRNN is mathematically equivalent to an RNN in which the current state $h_t$ depends explicitly on all its past states $\{h_{t-1},h_{t-2},h_{t-3},\ldots\}$. The SpatialRNN therefore provides an implicit and parameter-efficient realization of an unbounded receptive field. Crucially, the second-order gradient recurrence induced by the spatial medium can be designed to support nonvanishing gradient modes over arbitrarily long time horizons, ensuring robust marginal stability. To achieve this, we derive constructive conditions that constrain the (frozen-time) gradient modes on circles of prescribed radii for every admissible nonlinear perturbation, providing explicit control over memory retention (unit radius) and forgetting (smaller radii) through tunable decay rates.  
Our empirical results show that the gradient norms of trained SpatialRNNs remain nearly constant over long input trajectories and that the SpatialRNN achieves competitive performance on long-horizon benchmarks while often requiring considerably fewer parameters, highlighting the role of spatial computing as an efficient computational paradigm. 

\section{Spatial Computing Model}\label{sec:model}

We propose the SpatialRNN, a recurrent architecture designed to store long-term dependencies in sequential data by fundamentally changing how neurons communicate among themselves. In standard RNNs, information is propagated through a learned recurrence matrix $W_h$, resulting in instantaneous, all-to-all interactions between hidden (neuron) states $h_t \in \mathbb R^n$. Instead, the SpatialRNN replaces this direct neuron-to-neuron communication with a dynamic communication medium $\psi(x,t)$ through which information propagates over space and time. Each neuron interacts locally with this medium, which evolves according to a PDE. This induces structured, wave-like information propagation, allowing information to persist within the evolving medium rather than being repeatedly compressed into a finite-dimensional hidden state.

Formally, the medium evolves according to the second-order linear PDE:
\begin{equation}\label{eq:pde}
    \rho(x)\frac{\partial^2 \psi}{\partial t^2}+ \gamma(x) \frac{\partial \psi}{\partial t} = \mathcal L (\psi) + \beta(h(t)),
\end{equation}
where $x\in\mathbb R^2$ denotes the 2-dimensional spatial coordinate and $\psi(x,t)$ is the state of the communication medium. Here, $\mathcal L$ is a spatial linear operator modeling local interactions in the medium (e.g., $\mathcal L$ could be the Laplacian $\Delta = \frac{\partial^2 \psi}{\partial x_1^2} + \frac{\partial^2 \psi}{\partial x_2^2}$), $\rho(x)$ and $\gamma(x)$ are spatially-dependent functions controlling the local propagation and damping dynamics, respectively, and $\beta$ maps the neural activity $h(t)$ onto the spatial domain, acting as a forcing term on the medium $\psi(x,t)$. 

To implement this model in discrete time, we discretize the PDE \eqref{eq:pde}, yielding the following second-order linear difference equation (LDE):
\begin{equation}\label{eq:lde}
    \psi_t = \overline W \psi_{t-1} + \overline{\overline W}\psi_{t-2} + Bh_{t-1},
\end{equation}
where ${\overline W}, \overline{\overline W} \in \mathbb R^{n\times n}$ define the medium dynamics, and $B \in \mathbb R^{n\times n}$ acts as a forcing term driven by the neural state $h_t\in\mathbb R^n$. The SpatialRNN integrates this medium into the recurrent computation structure of an RNN, leading to our proposed model architecture:
\begin{align}
\text{SpatialRNN}\begin{cases}\label{eq:SpatialRNN}
    h_t = \Phi(W_x x_t + W_\psi\psi_t + b_h), \\
    \psi_t = \overline W \psi_{t-1} + \overline{\overline W}\psi_{t-2} + B h_{t-1},\\
    y_t = \text{softmax}(W_y^h h_t + W^{\psi,1}_y\psi_{t} +  W^{\psi,2}_y\psi_{t-1}+ b_y).
    \end{cases}
\end{align}
Here,  $x_t \in \mathbb R^m$ is the time-dependent input signal, $h_t\in\mathbb R^n$ is the hidden neuron state, and $y_t \in \mathbb R^p$ is the output state. The model is parameterized by the (trainable) input-to-state weights $W_x \in \mathbb R^{n \times m}$, medium-to-state weights $W_\psi \in \mathbb R^{n \times n}$, medium weights ${\overline W}, \overline{\overline W} \in \mathbb R^{n\times n}$, output weights $W_y^h, W^{\psi,1}_y, W^{\psi,2}_y \in \mathbb R^{p \times n}$, and biases $b_h \in \mathbb R^n$, $b_y \in \mathbb R^p$. 
The activation function $\Phi:\mathbb R^n\mapsto\mathbb R^n$ is applied elementwise, with $\Phi(z) = (\Phi(z_1),\ldots,\Phi(z_n))^\top$, and can represent any standard nonlinearity (e.g., tanh or ReLU). For classification tasks, a softmax output is taken without loss of generality. For initialization, we set $h_0=\psi_0 =\psi_{-1}= 0$. The medium $\psi_t\in \mathbb R^n$ serves as a \textit{spatiotemporal memory}, allowing information to propagate and interact over time without repeated compression.
Fig.~\ref{fig:cells_and_nwks}a illustrates the spatiotemporal dynamics of the SpatialRNN with a 2-dimensional medium. 

\begin{figure}[t]
    \centering
    \includegraphics[width=\linewidth]{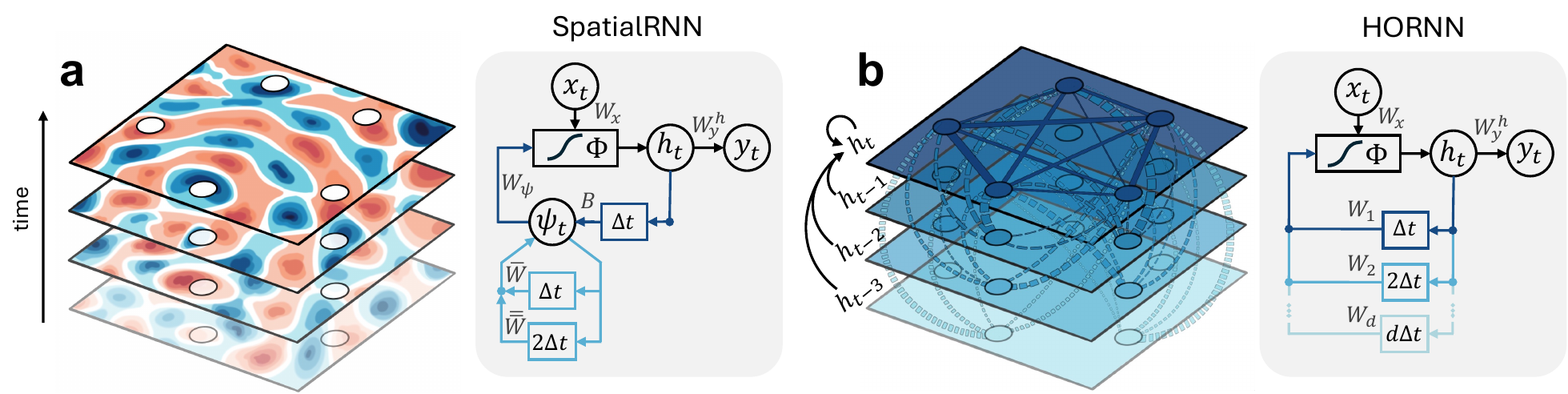}
    \caption{Schematic of the model architectures. 
    (\textbf{a}) \textit{SpatialRNN:} The neuron states $h_t$ (represented by white nodes) are coupled through a shared spatiotemporal communication medium $\psi_t$ (color coded), where information propagates locally over space and time. Each node writes to and reads from the medium, inducing wave-like interactions that encode past activity.
    (\textbf{b}) \textit{HORNN:} Nodes correspond to neuron states $h_t$, and interactions are explicitly represented by edges connecting the current state to a finite set of past states $h_{t-k}$, for $k=1,\ldots,d$. For $d=1$, the model reduces to a standard RNN. 
    Block diagrams of the corresponding discretized computations are shown on the right of each panel.
    }
    \label{fig:cells_and_nwks}
\end{figure}
%
%

\smallskip
\begin{remark}\label{rmk:stormerverlet}
    A PDE \eqref{eq:pde} of interest is the \textit{wave equation}, given by $\frac{\partial^2\psi}{\partial t^2} = c^2\Delta\psi + \beta(h(t))$, where $\Delta$ is the Laplacian operator and $c$ is the wave speed. Applying the temporal discretization $\frac{\partial^2\psi}{\partial t^2} \approx \frac{\psi_{t}-2\psi_{t-1} +\psi_{t-2}}{\Delta t^2}$ and the spatial discretization $\Delta \psi \approx W_\Delta \psi$, with the discrete Laplacian operator $W_\Delta \in \mathbb R^{n \times n}$, yields an LDE of the form~\eqref{eq:lde}. The update matrices are then $\overline W = 2\mathbb I + c^2\Delta t^2 W_\Delta$ and $\overline{\overline W} = -\mathbb I$ (where $\mathbb I$ denotes the identity matrix), corresponding to the classical Störmer-Verlet discretization.
\end{remark}

\subsection{Higher-Order and Standard RNNs}

A straightforward approach to extend the long-term memory of an RNN is to introduce direct dependencies of the current state $h_t$ on multiple time-delayed states $\{h_{t-1},h_{t-2},\ldots h_{t-d}\}$. This concept leads to  \textit{higher-order RNNs} (HORNNs) \citep{soltani2016higher}, which generalize standard RNNs by explicitly incorporating a finite history of past hidden states (Fig.~\ref{fig:cells_and_nwks}b). In this formulation, the update rule is written as a discrete convolution over past states, parameterized by a set of (trainable) recurrence matrices $\mathcal{W} = \{W_k\}_{k=1}^d$, with $W_k \in \mathbb{R}^{n \times n}$:
\begin{align}
       \text{HORNN}\begin{cases}\label{eq:HORNN}
        h_t = \Phi(W_x x_t + \sum_{k=1}^{d}W_k h_{t-k} + b_h), \\
        y_t = \text{softmax}(W_y^h h_t + b_y).
    \end{cases}
\end{align}
In practice, HORNNs exhibit a clear trade-off between memory capacity and parameter efficiency. The order $d\in\mathbb N$ defines the temporal receptive field of the model, but also increases the number of parameters linearly, since each additional lag $k$ introduces a new matrix $W_k$. Moreover, for any finite $d$, dependencies beyond lag $d$ are not modeled directly and must propagate through intermediate states, leading again to exponential decay of long-range memories for all lags beyond $d$ (see results in Sec.~\ref{subsec:hornn_standard_nonrobust} and Fig.~\ref{fig:gradient_plots} for detailed theoretical analysis). For $d=1$, the HORNN model reduces to the \textit{standard RNN}:
\begin{align}
    \text{Standard RNN}\begin{cases}\label{eq:standardRNN}
    h_t = \Phi(W_x x_t + W_h h_{t-1} + b_h), \\
    y_t = \text{softmax}(W_y h_t + b_y).
\end{cases}
\end{align}

\smallskip
\begin{remark}
From a theoretical standpoint, one may consider the limit $d \to \infty$, corresponding to an \textit{$\infty$-order RNN}. In this limit, the recurrent term becomes a full convolution over the entire history: $\sum_{k=1}^{t-1} W_k h_{t-k}$. While the $\infty$-order RNN eliminates the vanishing gradient problem by providing a direct, length-1 path to any past state $h_{t-k}$, it comes at prohibitive cost, requiring $\mathcal{O}(tn^2)$ parameters and $\mathcal{O}(tn)$ memory at step $t$. As a result, the model loses the main edge of state-based architectures.
\end{remark}

%


\subsection{Equivalence between SpatialRNNs and $\infty$-order RNNs}

The effective temporal context window of a finite-order HORNN \eqref{eq:HORNN} is limited by its receptive field $d$. Extending it to an infinite horizon $d \to \infty$ theoretically solves this limitation but becomes computationally intractable for long-sequence tasks due to the prohibitively large parameter and memory requirements.
Our key insight is that the SpatialRNN provides an implicit parameterization of this infinite history. Thus, instead of storing past states or learning a separate matrix $W_k$ for every lag $k$, the SpatialRNN generates the effective convolution kernel $\{W_k\}_{k=1}^{\infty}$ via the dynamics of the spatial medium, as formalized below.

\smallskip
\begin{theorem}[Equivalence between medium state and infinite history]
\label{thm:SpatialRNN_HORNN}
Let $\{h_t\}_{t \ge 1}$ be a sequence of hidden states and let $\psi_t$ be governed by a second-order LDE \eqref{eq:lde} 
with initial conditions $\psi_0 = \psi_{-1} = 0$. Then, the spatial state $\psi_t$ is equivalent to a convolution over the entire history of hidden states:
\begin{equation}
    \psi_t = \sum_{k=1}^{t-1} W_k h_{t-k},
    \label{eq:spatial_convolution}
\end{equation}
where the sequence of weight matrices $\{W_k\}_{k \ge 1}$ are defined recursively by
\begin{equation}\label{eq:recurrence}
W_1 = B, \quad W_2 = \overline W B, \quad \text{and} \quad
    W_k = \overline W W_{k-1} + \overline{\overline W} W_{k-2}, \,\, k\geq 3.
\end{equation}
\end{theorem}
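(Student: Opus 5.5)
Since \eqref{eq:lde} is linear with zero initial data, the plan is a direct induction on $t$. We verify that the closed form \eqref{eq:spatial_convolution}, with kernels given by \eqref{eq:recurrence}, satisfies the same recursion as $\psi_t$ and has the same initial values. No earlier result of the paper is needed beyond the definition of the LDE. It is convenient to adopt the conventions $W_0 := 0$ and $W_{-1}:=0$ and to note that $h_0 = 0$. With these conventions, \eqref{eq:recurrence} extends to all $k\ge 1$ as $W_k = \overline W W_{k-1} + \overline{\overline W} W_{k-2} + \delta_{k,1}B$. Indeed, for $k=1$ this gives $W_1=B$, and for $k=2$ it gives $W_2=\overline W B$.

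\textbf{Base cases.} For $t=1$, the LDE gives $\psi_1 = \overline W\psi_0+\overline{\overline W}\psi_{-1}+Bh_0 = 0$, which matches the empty sum in \eqref{eq:spatial_convolution}. Note that $h_0=0$ is used here; this is exactly why the sum runs only to $t-1$. For $t=2$, $\psi_2 = \overline W\psi_1 + B h_1 = W_1 h_1$, as claimed. (Equivalently, one could start the sum at lag $t$ by including $h_0$, which contributes nothing.)

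\textbf{Inductive step.} Assume \eqref{eq:spatial_convolution} holds at times $t-1$ and $t-2$, with $t\ge 3$. Substituting into the LDE gives
\begin{equation}
\psi_t = \sum_{k=1}^{t-2}\overline W W_k h_{t-1-k} + \sum_{k=1}^{t-3}\overline{\overline W} W_k h_{t-2-k} + B h_{t-1}.
\end{equation}
Next, reindex with $j=k+1$ in the first sum and $j=k+2$ in the second, so that both are sums over $h_{t-j}$:
\begin{equation}
\psi_t = B h_{t-1} + \sum_{j=2}^{t-1}\overline W W_{j-1} h_{t-j} + \sum_{j=3}^{t-1}\overline{\overline W} W_{j-2} h_{t-j}.
\end{equation}
Collecting coefficients of $h_{t-j}$ gives $B=W_1$ for $j=1$, $\overline W W_1 = \overline W B = W_2$ for $j=2$, and $\overline W W_{j-1}+\overline{\overline W}W_{j-2}=W_j$ for $3\le j\le t-1$. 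This is precisely \eqref{eq:recurrence}, which closes the induction.

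The only step requiring care is the index bookkeeping at the boundaries of the sums. One must check that the ranges of the reindexed sums match up to $j=t-1$, and that the special cases $j=1,2$ are exactly where the forcing term and the missing $\overline{\overline W}$ contribution produce $W_1=B$ and $W_2=\overline W B$. The conventions $W_0=W_{-1}=0$ make this uniform. A quick sanity alternative, which I would mention as a remark, is a generating-function view: the $z$-transform gives $\Psi(z) = (\mathbb I - \overline W z - \overline{\overline W} z^2)^{-1} B z\, H(z)$. Expanding this as a power series yields kernels obeying the same recursion, which confirms the result.
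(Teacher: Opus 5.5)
Your proof is correct and follows the paper's argument: substitute the convolution form into the LDE, reindex the two sums, and match the coefficient of each $h_{t-k}$ to recover $W_1=B$, $W_2=\overline W B$ and the three-term recursion. The difference is that you write this as an explicit induction with base cases $t=1,2$, which is logically cleaner than bare coefficient matching and exposes the implicit assumption $h_0=0$ (the model's initialization) needed for $\psi_1=0$.
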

\begin{proof}
See Appendix \ref{apx:proofs}. The result follows by recursively expanding \eqref{eq:lde} and matching the coefficients for each lag $h_{t-k}$. In particular, the sequence of matrices \eqref{eq:recurrence} corresponds to the impulse response~\citep{ogata1995discrete}
%
%
of the LDE \eqref{eq:lde} (i.e., the system response to a unit input at each lag).
\end{proof}

Theorem \ref{thm:SpatialRNN_HORNN} establishes a direct connection between the SpatialRNN and the $\infty$-order RNN, highlighting two fundamental advantages of our approach as $t\rightarrow\infty$.
On \textit{memory efficiency}, while the $\infty$-order RNN requires full history $\{h_k\}_{k=1}^{t-1}$ to compute the next state $h_t$, the SpatialRNN compresses this entire history into the medium states $\psi_t$ and $\psi_{t-1}$, restoring $\mathcal{O}(1)$ memory complexity during inference. On \textit{parameter efficiency}, an $\infty$-order RNN requires infinitely many unique parameters $\{W_k\}_{k=1}^{\infty}$, while the SpatialRNN uses only four fixed-size matrices $\{\overline{W}, \overline{\overline{W}}, B, W_{\psi}\}$ to implicitly encode this infinite sequence of matrices. The SpatialRNN can thus be viewed as a feasible realization of a \textit{structured} $\infty$-order RNN that is both memory and parameter efficient. By virtue of this equivalence, the SpatialRNN has an unbounded receptive field, allowing hidden states at arbitrary lags to influence the current state; whether this influence is retained during training is determined by the gradient dynamics, as we discuss next.

\smallskip
\begin{remark}\label{rmk:receptive_field}
Theorem~\ref{thm:SpatialRNN_HORNN} establishes an \textit{unbounded receptive field}, in the sense that every past state can, in principle, influence the current one. This is distinct from the notion of \textit{long-lived influence}, which requires the sensitivity of the current state to a past state to remain nonvanishing as their temporal separation increases, rather than decay exponentially (see Sec.~\ref{sec:Gradients} for conditions under which the SpatialRNN can ensure long-lived influence). It is also distinct from \textit{information capacity}, since an unbounded receptive field does not imply that a fixed-dimensional, finite-precision state can losslessly retain arbitrary information from an indefinitely long history. 
\end{remark}


\section{Gradient Dynamics and Conditions for Long-Term Memory}\label{sec:Gradients}
%
%


RNNs are typically trained via backpropagation through time (BPTT). For a standard first-order RNN architecture, the gradient of a sequence-to-output loss $\mathcal{L}$ with respect to the weights $W$ can be expressed via the chain rule as 
\begin{equation}
    \frac{\partial \mathcal{L}}{\partial W} = \frac{\partial \mathcal{L}}{\partial y_T} \frac{\partial y_T}{\partial h_{T}} \sum_{k=1}^{T}
\frac{\partial h_{T}}{\partial h_k}
\frac{\partial h_k}{\partial W},
    \label{eq:bptt_sum}
\end{equation}
where $T$ is the sequence length. 
This decomposition makes explicit that the loss gradients are governed by the state-to-state Jacobian matrices $\pdv{h_T}{h_k}$,
%
%
which quantify how perturbations propagate across time. We can recursively define the $m$-step state transition Jacobian matrix as follows:
\begin{equation} \label{eq:gradient_product}
\begin{aligned}
    G_{m}^{t} &:= \frac{\partial h_{t+m}}{\partial h_{t}}
    = \prod_{k=0}^{m-1} \frac{\partial h_{t+k+1}}{\partial h_{t+k}} 
    \quad \implies \quad
    G_{m}^{t} = A_{t+m-1} G_{m-1}^t,
\end{aligned}
\end{equation}
where $A_t := \frac{\partial h_{t+1}}{\partial h_{t}} \in\mathbb R^{n\times n}$ and $G_{1}^{t} = \frac{\partial h_{t+1}}{\partial h_{t}}$. Eq.~\eqref{eq:gradient_product} constitutes a \textit{discrete-time linear time-varying (LTV) dynamical system} because the matrices $A_{t}$ depend on the time-varying input sequence $x_t$.
As the temporal distance $m$ grows, the asymptotic behavior of $\|G_{m}^{t}\|$ determines whether the model can effectively learn long-term dependencies. If $\|G_{m}^{t}\|$ decays to zero (vanishing gradients) or grows unboundedly (exploding gradients), the training fails to capture long-horizon correlations~\citep{bengio1994learning}. Hereafter, we refer to the evolution of $G_m^t$ across time as the gradient dynamics.

To build intuition, we recall that, for a discrete-time linear time-invariant (LTI) system $z_m = Az_{m-1}$ with spectral radius $\rho(A) = \max_{\lambda\in\sigma(A)}|\lambda|$, the state decays exponentially if $\rho(A)<1$ (\textit{vanishing gradients}), grows unboundedly if $\rho(A)>1$ or if nonsimple eigenvalues lie on the unit circle (\textit{exploding gradients}), and remains bounded and nondecaying if $\rho(A)=1$ with all unit-modulus eigenvalues semisimple (\textit{marginal stability)}.
Our goal is to identify the structural conditions under which the gradient dynamics are marginally stable for every admissible input-dependent perturbation, allowing for arbitrarily long memory retention. However, establishing marginal stability conditions for arbitrary LTV systems, such as Eq.~\eqref{eq:gradient_product}, is generally challenging~\citep{sun2008note}.
To circumvent this challenge, we adopt instead a \textit{frozen-time approximation}, in which we hold the instantaneous transition matrix $A_t$ fixed at each reference time $t$ and analyze the spectrum of the resulting LTI system. In contrast, the full LTV gradient dynamics evolve through the product $G^t_m = A_{t+m-1}\cdot\cdot\cdot A_{t+1}A_t$. The frozen-time approximation thus serves as a tractable proxy for practical stability, since the marginal stability of every frozen-time system is a necessary, but not sufficient, condition for the marginal stability of the full LTV system under arbitrary switching of the perturbations. 

In what follows, we show that finite-order HORNNs, including standard RNNs, cannot robustly maintain marginal stability, and therefore inherently fall into regimes of vanishing or exploding gradients (Sec.~\ref{subsec:hornn_standard_nonrobust}). In contrast, we prove that the SpatialRNN, under suitable conditions, allows for robust eigenvalue placement on the unit circle, thereby achieving robust marginal stability of the frozen-time gradient dynamics (Sec.~\ref{subsec:spatialRNN_grads}). 

\subsection{Nonrobustness of RNN and HORNN Gradients}
\label{subsec:hornn_standard_nonrobust}

We now analyze the gradient dynamics of HORNNs. The key issue of these models is the lack of robustness: even when parameters are carefully initialized to have marginally stable modes, small variations induced by inputs and nonlinearities inevitably push the system into vanishing or exploding regimes.
Adopting a frozen-time approximation, we fix a reference time $t$ and examine the gradient dynamics of $G_m^t$. As in Eq.~\eqref{eq:gradient_product}, we characterize the gradient dynamics of HORNNs through the state-to-state Jacobian $G^t_m = \pdv{h_{t+m}}{h_t}$, yielding the following recurrence equation:
\begin{equation} \label{eq:HORNNgrad}
        G_m^t = D_m^t\sum\nolimits_{k=1}^{\min\{m, d\}}W_k G_{m-k}^t, \quad m\geq 1,
\end{equation}
initialized with $G_0^t = \mathbb I$ and $G_m^t = 0$ for $m<0$. Here, $D_m^t$ is a diagonal matrix defined as
\begin{equation}
    D_m^t = \text{diag}\Big(\Phi'\Big(W_x x_{t+m} + \sum\nolimits_{k=1}^{d}W_k h_{t+m-k} + b_h\Big)\Big),
\end{equation}
where $\Phi'$ denotes the derivative of the activation function $\Phi$. In the special case of a standard RNN ($d=1$), Eq.~\eqref{eq:HORNNgrad} reduces to $G_m^t = D_m^t W_h G_{m-1}^t$.

Crucially, for both HORNNs and standard RNNs, each matrix $D_{m}^{t}$ depends on the time-varying input $x_{t+m}$ and can therefore vary over time in a manner that is effectively uncontrolled. However, their entries are constrained by the range of the activation function derivative. This implies that, at each time step, $D_{m}^{t}$ lies in the set
\begin{equation*}
\mathcal D = \{D\in\mathbb R^{n\times n} \, | \, D = \operatorname{diag}(d_1,\ldots,d_n), \,\, d_i\in[\Phi'_{\min}, \Phi'_{\max}]\},
\end{equation*}
where $\Phi'_{\min} = \inf_{z\in\mathbb R} \Phi'(z)$ and $\Phi'_{\max} = \sup_{z\in\mathbb R} \Phi'(z)$ denote the bounds of the function $\Phi'$.
%
To assess robustness, we consider the family of systems obtained by fixing $D_{m}^{t} = D$, $\forall m$, with some constant matrix $D \in \mathcal{D}$. Each such choice corresponds to a frozen-time (LTI) system $G_m^t = D \sum_{k} W_k \, G_{m-k}^t$. A necessary condition for robust marginal stability \citep{liberzon2003switching} is that the frozen-time gradient dynamics remain marginally stable for every $D\in\mathcal D$.

\smallskip
\begin{proposition}[Spectrum of frozen-time HORNN gradients]\label{prop:HORNN_det_0}
For any fixed $D \in \mathcal{D}$, a value $\lambda \in \mathbb{C}$ is an eigenvalue of the HORNN frozen-time gradient dynamics \eqref{eq:HORNNgrad} if and only if
\begin{equation}\label{eq:HORNN_det_0}
        \det\big(\mathbb I - D\hat W(\lambda)\big) = 0, \quad \text{where} \quad \hat W(\lambda) = \sum\nolimits_{k=1}^d  W_k \lambda^{-k}.
    \end{equation}
\end{proposition}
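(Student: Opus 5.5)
The plan is to recast the frozen-time recurrence $G_m = D\sum_{k=1}^{d}W_kG_{m-k}$ as a first-order LTI system on an augmented state, and then compute the characteristic polynomial of its companion matrix using a block determinant identity. Since the recurrence acts column-wise on $G_m$, it suffices to study the vector recurrence $g_m = D\sum_{k=1}^d W_k g_{m-k}$ with $g_m\in\mathbb R^n$. Stack $z_m = (g_m^\top, g_{m-1}^\top,\ldots,g_{m-d+1}^\top)^\top\in\mathbb R^{nd}$, so that $z_m = \mathcal A z_{m-1}$. The companion matrix $\mathcal A$ has first block row $(DW_1,\ldots,DW_d)$, identity blocks on the subdiagonal, and zeros elsewhere. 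The eigenvalues of the frozen-time gradient dynamics are by definition the eigenvalues of $\mathcal A$. The zero-padding $G_m=0$ for $m<0$ and the truncation $\min\{m,d\}$ only affect the initial condition, not the transition matrix.

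The main step is to characterize $\det(\lambda\mathbb I_{nd}-\mathcal A)=0$. I would use the eigenvector route rather than expanding the determinant directly. Suppose $\mathcal A v=\lambda v$ with $v=(v_1,\ldots,v_d)$. The shift rows give $v_{k}=\lambda v_{k+1}$, so $v_{k}=\lambda^{d-k}v_d$. If $\lambda=0$, this forces $v_1=\cdots=v_{d-1}=0$, and then the first row gives $DW_dv_d=0$. Now assume $\lambda\neq0$ and set $u=v_1$, so that $v_k=\lambda^{-(k-1)}u$. The first row reads $\lambda u = D\sum_k W_k\lambda^{-(k-1)}u$, which is equivalent to $(\mathbb I - D\hat W(\lambda))u=0$. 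Moreover, $u\neq 0$ because $v\neq 0$. Conversely, any nonzero $u$ in $\ker(\mathbb I-D\hat W(\lambda))$ yields the eigenvector $v=(u,\lambda^{-1}u,\ldots,\lambda^{-(d-1)}u)$. This proves the equivalence for every $\lambda\neq0$.

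Alternatively, one can use a Schur complement or block elimination to obtain
\[
\det(\lambda\mathbb I_{nd}-\mathcal A)=\lambda^{nd}\det\big(\mathbb I-D\hat W(\lambda)\big)=\det\Big(\lambda^d\mathbb I-D\sum\nolimits_{k}W_k\lambda^{d-k}\Big).
\]
This is a cleaner certificate valid for all $\lambda\in\mathbb C$, once the right-hand side is interpreted as a polynomial.

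The expected obstacle is the point $\lambda=0$, where $\hat W(\lambda)$ is undefined. I would handle it by stating that \eqref{eq:HORNN_det_0} is to be read after multiplying through by $\lambda^{nd}$, that is, via the polynomial identity above. Equivalently, one can note that the zero eigenvalue is irrelevant to marginal stability, since zero modes vanish after finitely many steps. I would add a remark that the proposition concerns $\lambda\neq0$, with $\lambda=0$ an eigenvalue iff $DW_d$ is singular. The remaining steps are routine: checking the polynomial identity by the eigenvector argument, or by a direct block column operation that adds $\lambda^{-1}$ times each column block to the previous one.
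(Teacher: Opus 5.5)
Your proof is correct, but it takes a different route from the paper's. The paper never builds a state matrix. It forms the $z$-transform $\hat G(z)=\sum_{m\ge0}G_m^t z^{-m}$ and uses the convolution structure of the recurrence to get $\hat G(z)=\mathbb I+D\hat W(z)\hat G(z)$, hence $\hat G(z)=(\mathbb I-D\hat W(z))^{-1}$. It then identifies the eigenvalues with the poles of this transfer function, i.e.\ the roots of $\det(\mathbb I-D\hat W(z))$, citing this as a standard fact.

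You instead lift to the block companion matrix $\mathcal A$ and compute its spectrum directly. You do this either through the eigenvector structure $v=(u,\lambda^{-1}u,\ldots,\lambda^{-(d-1)}u)$ or through the identity $\det(\lambda\mathbb I_{nd}-\mathcal A)=\lambda^{nd}\det(\mathbb I-D\hat W(\lambda))$.

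Your route buys precision:
\begin{itemize}
\item ``Eigenvalue'' gets a concrete meaning: the spectrum of the one-step transition matrix. This matches how the paper later speaks of the companion matrix in Theorem~\ref{thm:Stable-Neutral-Mixing}.
\item The passage from transfer-function poles to eigenvalues, which the paper asserts by citation, becomes an explicit polynomial identity that also tracks multiplicities.
\item The point $\lambda=0$, where $\hat W$ is undefined and the statement as written does not make sense, is handled cleanly: $0$ is an eigenvalue iff $DW_d$ is singular.
\end{itemize}

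The paper's route buys brevity and generality. The generating-function computation never uses that $d$ is finite. So the same characterization is reused verbatim in Proposition~\ref{prop:Gz_margStab_det_0} for the $\infty$-order RNN, where $\hat W$ is an infinite series and no finite companion matrix exists in the HORNN coordinates.
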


\begin{proof}
    The proof exploits the ${z}$-transform of the linear recurrence system and is given in Appendix~\ref{apx:proofs}
\end{proof}

The characteristic equation \eqref{eq:HORNN_det_0} reveals 
that the eigenvalues $\lambda$ are roots of a polynomial whose coefficients depend linearly on the diagonal entries of $D$. Consequently, variations in $D$ directly perturb the spectrum, preventing robust marginal stability, as we formalize next.

\smallskip
\begin{proposition}[Non-existence of marginally stable HORNNs]
\label{prop:hornn_instability}
Suppose the derivative of the activation function $\Phi$ takes values arbitrarily close to 0. For any HORNN~\eqref{eq:HORNN}, defined by a set of weight matrices $\{W_k\}_{k=1}^d$ with finite $d\in\mathbb N$, there exists a matrix $D \in \mathcal{D}$ such that the gradient dynamics are not marginally stable (i.e., no root $\lambda$ of Eq.~\eqref{eq:HORNN_det_0} satisfies $|\lambda| = 1$).
\end{proposition}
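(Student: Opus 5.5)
The plan is to pick $D = \epsilon\,\mathbb I$ with $\epsilon>0$ small. The hypothesis that $\Phi'$ takes values arbitrarily close to $0$ means $\Phi'_{\min}=0$, or at least that such $\epsilon$ lie in the closure of the range of $\Phi'$. Since $\mathcal D$ is defined through the interval $[\Phi'_{\min},\Phi'_{\max}]$, every $\epsilon\in(0,\Phi'_{\max}]$ with $\Phi'_{\max}>0$ is admissible. Even $D=0$ would be allowed if $\Phi'_{\min}=0$, in which case the dynamics trivially give $G_m^t=0$ for $m\ge 1$. I would prefer a strictly positive $\epsilon$, so that the argument does not hinge on the degenerate choice and so that it also covers activations whose derivative only approaches $0$ asymptotically. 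By Proposition~\ref{prop:HORNN_det_0}, it then suffices to show that $\det(\mathbb I-\epsilon\hat W(\lambda))\neq 0$ for every $\lambda$ on the unit circle.

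The key step is a uniform bound on the unit circle. For $|\lambda|=1$ we have
\begin{equation}
\|\hat W(\lambda)\| \le \sum\nolimits_{k=1}^d \|W_k\|\,|\lambda|^{-k} = \sum\nolimits_{k=1}^d \|W_k\| =: M,
\end{equation}
which is finite because $d$ is finite. This is precisely where the finiteness of $d$ enters. Choose $\epsilon < 1/M$ if $M>0$; any $\epsilon$ works if $M=0$. Then $\|\epsilon\hat W(\lambda)\|<1$, so $\mathbb I-\epsilon\hat W(\lambda)$ is invertible by the Neumann series, and its determinant is nonzero for all $|\lambda|=1$. Hence no root of Eq.~\eqref{eq:HORNN_det_0} lies on the unit circle, and the frozen-time dynamics with $D=\epsilon\mathbb I$ are not marginally stable.

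To make the conclusion more informative, I would also note a refinement. Multiplying through by $\lambda^{nd}$ gives the polynomial $\det(\lambda^d\mathbb I-\epsilon\sum_k W_k\lambda^{d-k})$. For $|\lambda|\ge 1$ the same bound shows there are no roots, so all eigenvalues lie strictly inside the unit disk, which corresponds to vanishing gradients. One could even bound the spectral radius by $r$ solving $\epsilon\sum_k\|W_k\|r^{-k}=1$.

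The main obstacle is interpretive rather than technical: making sure the chosen $D$ actually belongs to $\mathcal D$ under the stated hypothesis. I would handle this by reading "arbitrarily close to 0" as $\Phi'_{\min}=0<\Phi'_{\max}$, so that $[0,\Phi'_{\max}]$ contains all small $\epsilon$. A secondary point is the convention at $\lambda=0$ in $\hat W(\lambda)$, which is irrelevant here because only $|\lambda|\ge1$ is examined.
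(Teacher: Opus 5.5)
Your proposal is correct and follows essentially the paper's argument: take a small scalar $D=\epsilon\mathbb I\in\mathcal D$ and use that $\hat W(\lambda)$ is uniformly bounded on the unit circle (because $d$ is finite), which rules out $\det(\mathbb I-\epsilon\hat W(\lambda))=0$ for $|\lambda|=1$. The paper argues by contradiction through $\det(\delta\mathbb I-\hat W(\lambda))=0$ with $\delta=1/\epsilon$, bounding the spectral radius via continuity and compactness. Your explicit norm bound $\sum_k\|W_k\|$ combined with the Neumann series is a bit more direct, and it also shows that all roots lie strictly inside the unit disk.
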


\begin{proof}
    See proof in Appendix \ref{apx:proofs}
\end{proof}

Proposition~\ref{prop:hornn_instability} identifies a structural limitation of finite-order recurrent architectures, since its proof relies only on the fact that $D$ multiplies the entire recurrence operator. The result is therefore independent of the specific parameterization of the recurrence matrices and extends to a broad class of architectures, including orthogonal~\citep{helfrich2018orthogonal,henaff2016recurrent}, nonnormal~\citep{kerg2019non}, and diagonal~\citep{li2018independently} RNNs. As a result, such parameterization constraints may mitigate or delay gradient decay, but they cannot make the gradient modes robust to nonlinear perturbations because the same input-dependent factor $D$ directly perturbs the coefficients governing modal decay. For instance, even orthogonal RNNs, which are initialized with eigenvalues on the unit circle $|\lambda(W_h)| = 1$, are not robustly marginally stable once nonlinearities are introduced: the input-dependent factor $D$ perturbs the transition matrix to $DW_h$, generically shifting its spectrum away from the unit circle (leading to contractive or expansive gradient modes depending on the realization of $D$). Fig. \ref{fig:gradient_plots} demonstrates the lack of robustness in HORNNs for finite $d$, showing the vanishing of gradients over long sequences.

\begin{SCfigure}[\sidecaptionrelwidth][t]
    \centering    \includegraphics[width=0.6\linewidth]{figures/fig_gradients.pdf}
    \caption{Gradient analysis of SpatialRNNs and HORNNs. 
    (\textbf{a})~Eigenvalue spectrum of the gradient dynamics for standard RNNs (top) and SpatialRNNs (bottom). Red and green markers indicate the eigenvalues computed for $D=\mathbb I$ and $D\in\mathcal D$, respectively. The spectrum of standard RNNs is sensitive to variations in $D$, whereas SpatialRNNs satisfying Theorem~\ref{thm:Stable-Neutral-Mixing} allow for arbitrary eigenvalue placement with invariant magnitude, including on the unit circle.
    (\textbf{b})~Gradient norm versus time horizon.
    (\textbf{c})~Gradient norm (at lag $m=20$) versus the number of parameters (with 8 neurons).
    The results in (b,c) are averaged over 10 trajectories with uniformly sampled inputs.
    }
    \label{fig:gradient_plots}
\end{SCfigure}

\subsection{Robustness of SpatialRNN Gradients}\label{subsec:spatialRNN_grads}

The gradient dynamics in SpatialRNNs is given by (see Appendix~\ref{apx:spatialRNN_gradients} for the derivation): 
\begin{equation}\label{eq:SpatialRNNfullrecurrence}
    G_m^t = D^t_{m} W_\psi \Psi_m^t, \quad m\geq 1,
\end{equation}
where $D_{m}^t = \text{diag}(\Phi'(W_x x_{t+m} + W_\psi \psi_{t+m} + b_h))$, and $\Psi_m^t$ follows the recurrence
\begin{align}\label{eq:SpatialRNNgradients}
    \begin{pmatrix} \Psi_{m}^t \\ \Psi_{m-1}^t\end{pmatrix} = \begin{pmatrix} \overline{W} + B D_{m-1}^t W_\psi & \overline{\overline{W}} \\ \mathbb I & 0 \end{pmatrix} \begin{pmatrix} \Psi_{m-1}^t \\ \Psi_{m-2}^t \end{pmatrix}, \quad m\geq 2,
\end{align}
initialized with $\Psi_{1}^t = B$ and $\Psi_0^t = 0$.
In contrast to RNNs and HORNNs, the SpatialRNNs gradient dynamics are governed by the \textit{second-order} recurrence system \eqref{eq:SpatialRNNgradients}
. In this formulation, the nonlinearity term $D^t_{m}$ acts as a structural perturbation to the gradient dynamics rather than a scaling factor applied at each step. This distinction is crucial as the asymptotic behavior of the gradients is determined by the spectral properties of the transition matrix associated with $\Psi_m^t$, rather than by the instantaneous diagonal entries of $D^t_{m}$. Under a frozen-time approximation, we can derive the sufficient conditions for \textit{robust} gradient behavior of the SpatialRNN. In particular, the following result shows that the eigenvalues $\lambda \in \mathbb C$ of the resulting companion matrix \eqref{eq:SpatialRNNgradients} can be constrained to lie on circles of \textit{fixed radius} $\alpha_i$ which are invariant to the specific realization of $D_{m-1}^t \in \mathcal D$. 

\begin{theorem}[Robustness of SpatialRNN gradients]\label{thm:Stable-Neutral-Mixing}
Let the matrices $\overline{W}$ and $\overline{\overline{W}}$ be defined as block upper-triangular matrices with blocks sizes $n_i$, $i=1,\ldots,k$, such that 
\begin{equation}
    \overline{W} = \begin{pmatrix}
            \overline{W}_1 & \star & \star \\
            0 & \ddots & \star \\
            0 & 0 & \overline{W}_k
        \end{pmatrix} \quad \text{and} \quad
        \overline{\overline{W}} = \begin{pmatrix}
            -\alpha_1\mathbb{I}_{n_1} & \star & \star \\
            0 & \ddots & \star \\
            0 & 0 & -\alpha_{k}\mathbb{I}_{n_k}
        \end{pmatrix},
\end{equation}
where $\overline{W}_i\in\mathbb R^{n_i\times n_i}$, $\alpha_i\in\mathbb R_{>0}$, $\sum_{i=1}^k n_i = n$ is the model size, and $\star$ denotes arbitrary entries.
Suppose the following conditions are satisfied:
    \begin{enumerate}[leftmargin=*]
        \item $\overline{W_i} = \overline{W_i}^\top$, $B = \operatorname{blkdiag}\{B_1,\ldots,B_k\}$, for block matrices $B_i \in \mathbb R^{n_i\times n_i}$, and $W_\psi = \Sigma B^\top = \operatorname{blkdiag}\{W_{\psi,1},\ldots,W_{\psi,k}\}$ for some diagonal matrix $\Sigma \in \operatorname{diag}(\mathbb R^n)$.
        
        \item $\|\overline W_i\|_2 + \overline{\Phi'}\|B_i\|_2\|W_{\psi,i}\|_2 < 2\sqrt{\alpha_i} \ \forall i$, where $\overline{\Phi'} := \max\{|\Phi'_{\min}|, |\Phi'_{\max}|\}$.
    \end{enumerate}
    Then, for any matrix $D_{m-1}^t\in\mathcal D$, the companion matrix of the SpatialRNN gradient dynamics \eqref{eq:SpatialRNNgradients} has exactly $2n_i$ simple eigenvalues of moduli $|\lambda|=\sqrt{\alpha_i}$ for each block $i = 1, \dots, k$.
\end{theorem}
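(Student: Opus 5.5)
The plan is to reduce the spectrum of the $2n\times 2n$ companion matrix in \eqref{eq:SpatialRNNgradients} to that of a quadratic matrix polynomial, exploit the block-triangular structure to decouple the blocks, and then show that within each block the eigenvalues are forced onto the circle $|\lambda|=\sqrt{\alpha_i}$ by a Hermitian-form argument. Write $M := \overline W + B D W_\psi$ for a fixed $D=D_{m-1}^t\in\mathcal D$. By the standard companion-matrix identity (Schur complement of the lower-right block),
\[
\det\Big(\lambda \mathbb I_{2n} - \begin{pmatrix} M & \overline{\overline W}\\ \mathbb I & 0\end{pmatrix}\Big) = \det\big(\lambda^2\mathbb I - \lambda M - \overline{\overline W}\big).
\]
Since $B$, $D$, $W_\psi$ are block diagonal and $\overline W$, $\overline{\overline W}$ are block upper triangular with the same partition, $\lambda^2\mathbb I-\lambda M-\overline{\overline W}$ is block upper triangular. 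Its determinant therefore factors as $\prod_{i}\det(\lambda^2\mathbb I_{n_i}-\lambda M_i+\alpha_i\mathbb I_{n_i})$, where $M_i=\overline W_i + B_i D_i W_{\psi,i}$. Hence it suffices to treat each block separately and show that the degree-$2n_i$ polynomial $p_i(\lambda)=\det(\lambda^2-\lambda M_i+\alpha_i)$ has $2n_i$ simple roots, all of modulus $\sqrt{\alpha_i}$.

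For a single block, I would first use condition 1 to show that $M_i$ is similar to a real symmetric matrix. Indeed $B_iD_iW_{\psi,i}=B_iD_i\Sigma_iB_i^\top$ with $D_i\Sigma_i$ diagonal, so this term is symmetric, and $\overline W_i$ is symmetric by assumption. Thus $M_i$ itself is real symmetric, with real eigenvalues $\mu_1,\dots,\mu_{n_i}$ and an orthonormal eigenbasis. (If the intended reading requires only that $\Sigma$ commute appropriately, this symmetric structure is the point to double-check.) Diagonalizing $M_i$ gives $p_i(\lambda)=\prod_j(\lambda^2-\mu_j\lambda+\alpha_i)$. By condition 2 and the bound $\|D_i\|_2\le\overline{\Phi'}$, we get $|\mu_j|\le\|M_i\|_2\le\|\overline W_i\|_2+\overline{\Phi'}\|B_i\|_2\|W_{\psi,i}\|_2<2\sqrt{\alpha_i}$. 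Each scalar quadratic then has negative discriminant $\mu_j^2-4\alpha_i<0$, hence two complex-conjugate roots with product $\alpha_i$, i.e.\ both of modulus exactly $\sqrt{\alpha_i}$ and distinct from each other. This already yields $2n_i$ eigenvalues on the circle of radius $\sqrt{\alpha_i}$ for every $D\in\mathcal D$.

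The main obstacle is simplicity. If $M_i$ has a repeated eigenvalue $\mu_j=\mu_l$, the corresponding roots coincide as roots of the characteristic polynomial, so they are not algebraically simple. I would handle this in one of two ways. The first is to read ``simple'' as semisimple: the companion matrix restricted to the $i$th diagonal block is similar, via the eigenbasis of $M_i$, to a direct sum of $2\times2$ companion matrices $\begin{pmatrix}\mu_j&-\alpha_i\\1&0\end{pmatrix}$, each with distinct eigenvalues, so the block is diagonalizable. The second is to require generic distinct $\mu_j$. I would also need to verify that the off-diagonal $\star$ blocks do not create Jordan chains when eigenvalues of different blocks coincide. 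When $\alpha_i\ne\alpha_j$ the circles are disjoint, so there is no interaction; equal $\alpha$'s would need a genericity or separate argument. Since what the gradient analysis uses is marginal stability (moduli $\sqrt{\alpha_i}$ plus no Jordan growth), I would state the simplicity claim with that caveat.
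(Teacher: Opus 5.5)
Your argument is the paper's own: reduce the companion matrix to $\det(\lambda^2\mathbb I-\lambda(\overline W+BDW_\psi)-\overline{\overline W})$, factor over the diagonal blocks, use condition 1 to make $\overline W_i+B_iD_iW_{\psi,i}$ real symmetric, split into scalar quadratics $z^2-\mu_jz+\alpha_i$, and use condition 2 to force a negative discriminant and hence modulus $\sqrt{\alpha_i}$. Your caveat on simplicity is justified and goes beyond the paper, whose proof only shows that the two roots of each scalar quadratic are distinct, so the statement's ``simple'' overclaims: repeated $\mu_j$ (e.g.\ $k=1$, $n\ge2$, $\alpha_1=1$, $\Phi=\tanh$, $\overline W_1=0$, $B=\Sigma=\mathbb I$, $D=d\,\mathbb I$) make every eigenvalue $n$-fold, though still semisimple, and equal $\alpha$'s across coupled blocks can give a genuine Jordan block (e.g.\ $n_1=n_2=1$, $\alpha_1=\alpha_2=1$, $\overline W=\bigl(\begin{smallmatrix}0&1\\0&0\end{smallmatrix}\bigr)$, and $B=0$ or $B=\epsilon\mathbb I$, $\Sigma=\mathbb I$, $D=d\,\mathbb I$, where each root $\lambda_0$ is double but $\lambda_0^2\mathbb I-\lambda_0(\overline W+BDW_\psi)-\overline{\overline W}=-\lambda_0\overline W$ has a one-dimensional kernel), so your repaired reading (semisimplicity within each block, plus distinct $\alpha_i$ or a genericity assumption across blocks) is the correct form of the claim.
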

\begin{proof}
    See Appendix \ref{apx:proofs}. The proof exploits the block-triangular structure of $\overline{W}$ and $\overline{\overline{W}}$ to factor the characteristic eigenvalue equation into a product of $k$ independent polynomials. Condition 1 ensures that the eigenvalues of $\overline W_i + B_i D^t_{m-1} W_{\psi,i}$ are real, while condition 2 ensures that the discriminant of the resulting quadratic equations remains negative, leading to underdamped (oscillatory) modes. Consequently, the eigenvalues form complex conjugate pairs whose magnitude is determined by the constant $\sqrt{\alpha_i}$.
\end{proof}


The block-triangular structure of the weight matrices enforces a directed information flow among the $k$ blocks in the PDE dynamics. Within each block, neurons interact fully, while information flows strictly upstream among blocks. This structure allows the parameter $\alpha_i$ to directly control the gradient dynamics at each block $i$. By choosing $\alpha_i=1$, the SpatialRNN achieves marginal stability across all admissible nonlinear perturbations, enabling long-term gradient propagation and memory retention. Setting $\alpha_i < 1$, in turn, leads to exponential decay at the controlled rate, providing an explicit forgetting mechanism at a desired rate.
For a SpatialRNN satisfying the conditions of Theorem~\ref{thm:Stable-Neutral-Mixing} (with $\alpha_1 = 1$ and a single block $k=1$), Fig.~\ref{fig:gradient_plots}b shows that gradient norms remain stable over long sequences of time-varying matrices $D_m^t \in \mathcal{D}$.
Importantly, the parameters $\alpha_i$ can either be fixed a priori or learned via BPTT.

\smallskip
\begin{remark}
The wave equation directly satisfies the conditions in Theorem~\ref{thm:Stable-Neutral-Mixing}  for marginal stability. The system reduces to a single block ($k=1$), with $\alpha_1 = 1$, $\overline W = 2\mathbb I + c^2\Delta t^2 W_\Delta$, and $\overline{\overline W} = -\mathbb I$.
\end{remark}

\smallskip
\begin{remark}
Given the equivalence between SpatialRNNs and $\infty$-order RNNs (Theorem~\ref{thm:SpatialRNN_HORNN}), it follows that, unlike their finite-order counterparts, $\infty$-order RNNs can achieve robust marginal stability when the delay matrices $\{W_k\}_{k\geq 1}$ are appropriately constrained (see Appendix~\ref{apx:spatialrnn-hornn}).
\end{remark}

SpatialRNNs, similarly to SSMs, can improve the training memory efficiency of recurrent models. In standard RNNs, BPTT demands $\mathcal{O}(T)$ memory to cache intermediate hidden states for gradient computation. For SpatialRNNs, the medium dynamics are invertible, so that intermediate states can be reconstructed backward in time from the final states, reducing this requirement to $\mathcal{O}(1)$ in exact arithmetic. In finite precision, however, the numerical reliability of this reconstruction depends on the dynamical regime. In the non-dissipative case ($\alpha_i=1,\,\forall i$), the medium dynamics preserve a symplectic form~\citep{marsden2001discrete}, enabling backward reconstruction to remain accurate to machine precision. By contrast, dissipative blocks ($\alpha_i<1$) exponentially amplify round-off errors under backward reconstruction and therefore require caching or checkpointing. Further details on this analysis are presented in Appendix~\ref{apx:symplectic}.

\section{Experiments}\label{sec:experiments}
We evaluate the memory capabilities of the SpatialRNN on benchmark datasets with long temporal dependencies, comparing it against other recurrent models. Experimental details (datasets, model architectures, and training procedures) are included in Appendix~\ref{apx:exp_det}; no regularizers (e.g., dropout or normalization) have been used, highlighting the intrinsic capabilities of the SpatialRNN without auxiliary optimization. All experiments use standard BPTT.
Code to reproduce all experiments can be found in \url{anonymous.4open.science/r/SpatialRNN-code-748C}.
%
%

\textbf{Copy task.}
As a proof-of-concept, we compare SpatialRNNs and HORNNs on the copy task~\citep{hochreiter1997long,arjovsky2016unitary},
%
%
a benchmark designed to test the ability to store information over long sequences and accurately retrieve it. The input sequence $X = [X_{\text{data}}, X_{\text{dist}}, X_{\text{trig}}] \in \{0, \pm 1\}^{D \times T}$ consists of $L=5$ binary tokens $X_{\text{data}}$ with dimension $D=4$ (representing the data to be stored), followed by $T-2L$ distractors $X_{\text{dist}}$ (padded with zeros), and finally a sequence of $L$ triggers $X_{\text{trig}}$ (padded with ones). The models are trained to reconstruct the $X_{\text{data}}$ at the final $L$ steps. Here, we evaluate the model performance for sequences of increasing length $T \in \{2^p\}_{p=1}^{10}$.


Fig.~\ref{fig:copy} compares the SpatialRNN against HORNNs (64 neurons) with increasing receptive fields, from $d=1$ (standard RNN, 5.5k parameters) to $d=15$ (17.7k parameters). For the SpatialRNN, we impose the conditions of Theorem~\ref{thm:Stable-Neutral-Mixing} with $k=4$ block matrices of sizes $n_i\in\{32, 8, 8, 8\}$ (6.8k parameters) and $\alpha_1=1$  to guarantee marginal stability. 
Consistent with the numerical
\begin{figure}[t]
    \centering
    \includegraphics[width=0.6\linewidth]{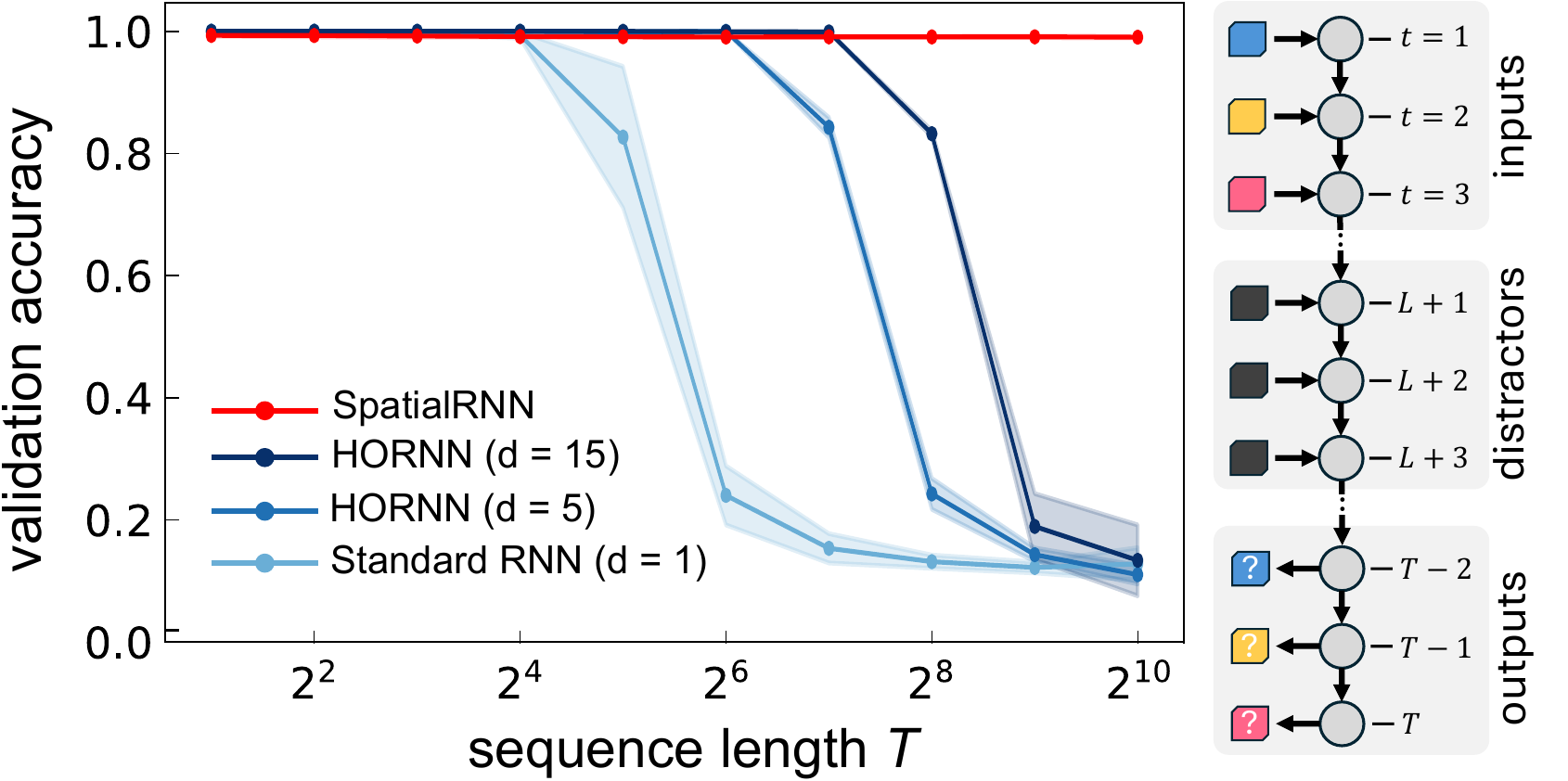}
    \caption{Performance of SpatialRNN and HORNNs on the copy task problem, shown as a function of sequence length $T$.}
    \label{fig:copy}
\end{figure}
gradient analysis in Fig.~\ref{fig:gradient_plots}b, 
HORNN performance degrades with increasing $T$,  with larger receptive fields $d$ delaying, but not preventing, this deterioration. In contrast, the SpatialRNN maintains near-perfect (${\approx}99\%$) validation accuracy across all sequence lengths, despite having fewer parameters. 
This behavior is consistent with the robust marginal stability established in Theorem~\ref{thm:Stable-Neutral-Mixing} (with $\alpha_1=1$), which enables long-term memory retention. The HORNN comparison also serves as an ablation of state dimension versus PDE-mediated structure: A $d$-order HORNN has a lifted state of dimension $nd$, so a second-order HORNN already matches the order and state dimension of the SpatialRNN ($2n$, corresponding to $\psi_t,\psi_{t-1}$), while models with $d\geq5$ greatly exceed it. Nevertheless, all finite-order HORNNs remain subject to Proposition~\ref{prop:hornn_instability} and eventually lose accuracy in the copy task for large $T$, indicating that the observed advantage of the SpatialRNN cannot be attributed simply to higher recurrence order or increased state dimension. Interestingly, even when $\alpha_i$ is left unconstrained for all blocks, training exhibits a sharp transition in performance when the model learns a mix of blocks with $\alpha_i < 1$ and $\alpha_j \approx 1$, allowing it to simultaneously retain information and forget distractors (see Appendix~\ref{app.emergencealpha} and  Fig.~\ref{fig:alpha_evolution}).

\smallskip
\textbf{Permuted sequential MNIST (psMNIST).}
We evaluate long-range dependency modeling on the psMNIST classification task~\citep{le2015simple,arjovsky2016unitary}. Here, MNIST images are first flattened into sequences of length $T=784$ with feature dimension $D=1$, such that pixels are fed to the model one at a time. The sequence is then normalized and randomly permuted, destroying the local spatial structure of the images and forcing the model to rely entirely on long-term temporal memory for classification.

\begin{table}[h]
\centering
\begin{tabular}{lcc}
\toprule
\textbf{Model (\# units)} & \textbf{Acc. (\%)} & \textbf{Params} \\
\midrule
unitary RNN (512)~\citep{arjovsky2016unitary}& 91.4 (max) & 9k \\
LSTM (256)~\citep{helfrich2018orthogonal} & 92.9 (max) & 270k \\
GRU (256)~\citep{chang2017dilated} & 94.1 (max) & 200k \\
NWM (256)~\citep{keller2023neural} & $94.8 \pm 1.1$ & 50k \\
coRNN (256)~\citep{keller2023neural} & $95.0 \pm 2.4$ & 134k \\
\rowcolor{gray!20}SpatialRNN (88) & $95.5 \pm 0.2$ & 13k \\
\rowcolor{gray!20}SpatialRNN (152) & $96.3 \pm 0.1$ & 36k \\
\bottomrule
\end{tabular}
\caption{psMNIST performance. Mean ± std over 3 runs when available; otherwise, maximum accuracy. Only single-layer architectures are accounted for.}
\label{tab:psmnist_results}
\end{table}

We evaluate two SpatialRNN configurations: an 88-neuron model (block sizes $\{64, 8, 8, 8\}$, 12.8k parameters) and a 152-neuron model (block sizes $\{128, 8, 8, 8\}$, 36.5k parameters). In both cases, we enforce $\alpha_1=1$ on the larger block to guarantee marginal stability. The SpatialRNNs achieve the highest accuracy among the single-layer recurrent baselines, $96.3 \pm 0.1$\% (Table~\ref{tab:psmnist_results}), outperforming standard gated architectures such as LSTM 
and GRU, 
as well as recent oscillatory models including NWM 
\& coRNN. 
Notably, this performance is achieved with roughly four to ten times fewer parameters.

\smallskip
\textbf{Noise padded CIFAR-10 (npCIFAR10).}
In this benchmark \citep{chang2019antisymmetricrnn}, CIFAR-10 images (of size $32\times 32\times 3$) are normalized and reshaped into sequences of length $L=32$ with feature dimension $D=96$, so that each timestep contains one image row across all color channels. Standard normal noise is then appended to reach a total sequence length of $T=1000$, acting as distractors.

We evaluate two SpatialRNN configurations: a 88-neuron model (block sizes $\{64, 8, 8, 8\}$, 21.2k parameters) and an 152-neuron model (block sizes $\{128, 8, 8, 8\}$, 50.9k parameters), where we once again enforce $\alpha_1=1$ for marginal stability. The 88-neuron SpatialRNN achieves competitive accuracy while operating with a fraction of the parameters required by standard gated and continuous-time baselines (Table~\ref{tab:npcifar10_results}). Most importantly, Fig.~\ref{fig:cifar} shows that performance is invariant to the amount of appended noise, indicating that accuracy is mostly dictated by the spatial feature extraction of the initial sequence rather than by temporal degradation over the sequence length $T$.



\begin{table}[h]
  \centering
        \begin{tabular}{lcc}
        \toprule
        \textbf{Model (\# units)} & \textbf{Acc. (\%)} & \textbf{Params} \\
        \midrule
        LSTM (128)~\citep{kag2020rnns} & 11.6 & 116k \\
        GRU (128)~\citep{keller2023traveling} & 43.8 & 88k \\
        iRNN (529)~\citep{keller2023traveling} & 51.3 & 336k \\
        wRNN (256)~\citep{keller2023traveling} & 55.0 & 435k \\        \rowcolor{gray!20}SpatialRNN (88) & 57.2 & 21k \\
        LRNN (128)~\citep{keller2023traveling} & 57.4 & 46k \\
        \rowcolor{gray!20}SpatialRNN (152) & 57.8 & 51k \\
        coRNN (128)~\citep{rusch2020coupled} & 59.0 & 46k \\
        LEM (128)~\citep{keller2023traveling} & 60.5 & 116k \\
        \bottomrule
        \end{tabular}
        \caption{npCIFAR10 performance for $T=1000$. Maximum test accuracy over 3 trials.}
\label{tab:npcifar10_results}
\end{table}

\begin{figure}[h]
  \centering
  \includegraphics[width=0.4\linewidth]{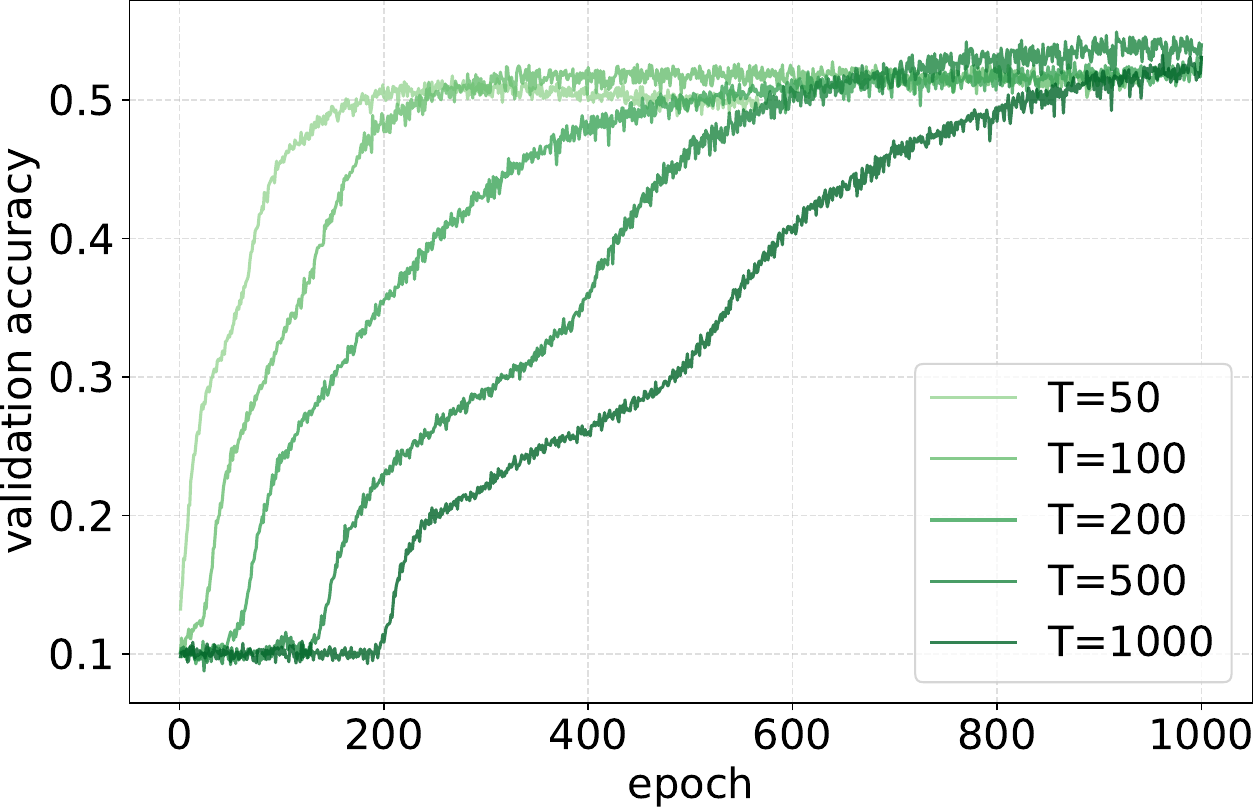}
  \caption{SpatialRNN validation accuracy on the npCIFAR10 task for varying $T$.}
  \label{fig:cifar}
\end{figure}

\section{Conclusion}\label{sec:conclusion}

In this work, we introduced a spatial computing model in which neuron states communicate through a dynamically evolving spatial medium rather than through direct recurrent connections. We showed that the medium state implicitly encodes the entire history of past hidden states, giving the model an unbounded receptive field with a fixed number of parameters. The second-order dynamics induced by the medium also provide a structural mechanism for controlling gradient propagation: under suitable constraints, the gradient modes can be confined to circles of prescribed radii despite input-dependent nonlinear perturbations, enabling long-term memory retention. Together, these results establish spatial dynamics as a parameter-efficient mechanism for combining long-range temporal dependencies with robust gradient propagation.

From a theoretical perspective, our analysis identifies a structural principle underlying long-term memory in recurrent architectures. We show that robustness to input-dependent nonlinear perturbations is governed not simply by the spectrum or parameterization of the recurrence matrices, but by how the nonlinearity enters the gradient dynamics (Proposition~\ref{prop:hornn_instability} and Theorem~\ref{thm:Stable-Neutral-Mixing}). In finite-order RNNs, the nonlinear term directly perturbs the coefficients governing modal decay, precluding robust marginal stability. In the SpatialRNN, the second-order recurrence separates the input-dependent perturbation from the coefficients that determine the magnitudes of the gradient modes, allowing those magnitudes to remain invariant. This distinction provides a general design principle for recurrent architectures capable of sustaining long-lived gradient modes.

\textbf{Related work.}
Neural ordinary differential equations \citep{chen2018neural} generalize RNNs by extending discrete-time recurrences to continuous-time dynamics, enabling richer state evolution and improved control over gradient behavior. A closely related line of work considers oscillatory \citep{rusch2020coupled, rusch2022graph} and second-order \citep{effenberger2025functional,rusch2021unicornn} dynamical systems, in which neurons evolve as coupled oscillators. When constrained to local interactions, these models give rise to wave-like dynamics, as in neural wave machines~\citep{keller2023neural}, which have demonstrated strong performance on spatially structured data~\citep{jacobs2025traveling} and long-sequence tasks~\citep{keller2023neural}. These models often exploit traveling waves to encode and propagate temporal information \citep{benigno2023waves,jacobs2025traveling,shervani2026emergent}. However, their stability depends directly on discretization parameters, and thus the gradient dynamics exhibit residual exponential scaling with the time step $\Delta t$~\citep{rusch2022graph}. Related ideas have also been explored in other state-based architectures, including reservoir computing~\citep{benigno2023waves} and SSMs~\citep{rusch2024oscillatory}. Our approach is closely related to these frameworks but differs fundamentally since traveling waves are encoded in a shared medium, rather than emerging from interactions between local oscillatory units; these approaches coincide only under certain restrictive conditions (Appendix~\ref{app.coupledoscillator}).

\textbf{Limitation.}
A key limitation of our analysis is the gap between the frozen-time and the full LTV gradient dynamics. The frozen-time guarantees of Theorem~\ref{thm:Stable-Neutral-Mixing} are necessary, but not sufficient, for marginal stability under arbitrary switching, since products of matrices with unit-modulus spectra may still grow or decay. 
%
%
Gradients may thus grow over long horizons, although we observe empirically that this drift is substantially attenuated and does not appear exponential. In particular, the exact Jacobians of a trained SpatialRNN change by only around $6\%$ over $783$ steps along psMNIST trajectories (see Appendix~\ref{apx:ltv_empirical} for further numerical analysis). Asymptotic stability of the SpatialRNN gradients can be enforced for the full LTV system under additional structural constraints (Theorem~\ref{thm:guas}), yet establishing necessary and sufficient conditions for LTV marginal stability remains an open problem.

More broadly, our framework provides an explicit mechanism through which spatiotemporal dynamics endow recurrent models with long-lived memory\textemdash a longstanding goal in the AI community \citep{keller2026spatiotemporal}. However, memory alone does not determine overall performance. The information capacity of the medium is effectively bounded by its dimension and numerical precision (Remark~\ref{rmk:receptive_field}), while feature extraction and expressivity remain architecture-dependent. This result is evident in the npCIFAR10 benchmark (Fig. \ref{fig:cifar}), where the SpatialRNN robustly maintains performance across varying sequence lengths yet remains limited in classification accuracy. Computational efficiency presents another trade-off: Each SpatialRNN step involves a few more matrix-vector products than standard RNNs, and its nonlinear recurrence is evaluated sequentially (unlike the associative scans in linear SSMs). 

\textbf{Future work.} A natural direction for future work is thus to integrate spatial computing mechanisms into more expressive state-based architectures such as SSMs, which can scale efficiently on modern hardware \citep{gonzalez2025predictability}; for instance, by lifting their recurrence equations to second order. We expect this paradigm to be best suited to streaming problems requiring a fixed-size state, long retention, and selective forgetting. Relevant applications include sensor monitoring, long-horizon control, and spatiotemporal signal prediction, where the locality and geometry of the medium can provide useful inductive biases. We also expect this formulation to find applications in physical computing, where wave-based systems, such as those exhibiting solitons, serve as computational substrates \citep{hughes2019wave,marcucci2020theory,montanari2026energy}. 
In this context, the symplectic structure induced by non-dissipative PDE dynamics can provide an additional advantage \citep{chen2019symplectic}, enabling exact backward state reconstruction and reducing the memory cost of backpropagation to $O(1)$ with respect to the sequence length $T$.

\section*{Broader Impact Statement}
This work studies how a spatially evolving medium, governed by discretized PDEs, can mediate memory retention and gradient propagation in recurrent models. Its contributions are theoretical results and small-scale benchmarks, and we do not foresee direct negative societal impacts beyond those common to general-purpose sequence models. By retaining long-range dependencies in a fixed-size state, rather than through gating mechanisms or a key-value memory that grows with sequence length, the SpatialRNN may help reduce the memory cost of long-horizon inference. We also expect the formulation to be relevant to analog and wave-based physical computing substrates, where the medium dynamics could be realized directly in hardware. Whether these advantages translate into energy savings in practice remains to be established.

\subsubsection*{Acknowledgments}
This work was supported by the ARO Grant No.~W911NF-24-1-0228.


\bibliography{biblio_2026}
\bibliographystyle{tmlr}


\appendix

\section{Supplementary Theoretical Analysis}
\label{app.theory}

\subsection{Proofs of Main Theoretical Results}\label{apx:proofs}

We provide the proofs for the main theoretical statements presented in the main text.

\begin{proof}[Proof of Theorem \ref{thm:SpatialRNN_HORNN}]
    Consider the medium state $\psi$ given by Eq.~\eqref{eq:spatial_convolution} and substitute it into both sides of the LDE~\eqref{eq:lde}:
    \begin{align*}
     \sum_{k=1}^{t-1} W_k h_{t-k} = \overline W \sum_{k=1}^{t-2} W_k h_{t-1-k} + \overline{\overline W} \sum_{k=1}^{t-3} W_k h_{t-2-k} + B h_{t-1} \\
    = \overline W \sum_{k=2}^{t-1} W_{k-1} h_{t-k} + \overline{\overline W} \sum_{k=3}^{t-1} W_{k-2} h_{t-k} + B h_{t-1}.
            \end{align*}
    Equating the coefficients in both sides for each $h_{t-k}$, we obtain $W_1 = B$ for $k=1$, $W_2 = \overline{W}W_1$ for $k=2$, and $W_k = \overline W W_{k-1} + \overline{\overline W} W_{k-2}$ for $k\geq 3$.
\end{proof}

\begin{proof}[Proof of Proposition \ref{prop:HORNN_det_0}]
    Consider the frozen-time recurrence system $G_m^t = D \sum_{k} W_k \, G_{m-k}^t$, for $m\geq 1$, and $G_0^t = \mathbb I$. Define the $z$-transform $\hat G(z) := \sum_{m=0}^\infty G^t_m z^{-m}$. Then, it follows from the recurrence system that
    \begin{align}
        \hat G(z) &= G_0 + \sum_{m=1}^\infty D\left( \sum_{k=1}^{m}  W_k G^t_{m-k} \right) z^{-m} \\
        &= \mathbb I + D \sum_{k=1}^\infty W_k z^{-k} \sum_{m=k}^\infty G^t_{m-k} z^{m-k}\\
        &= \mathbb I + D \left(\sum_{k=1}^\infty W_k z^{-k}\right)\left( \sum_{j=0}^\infty G^t_{j} z^{-j}\right)\\
        &= \mathbb I + D W(z) G(z).
    \end{align}
    Re-arranging to make the function explicit, we get 
    \begin{equation} \label{eq:Gz}
        G(z) = (\mathbb I - D \hat W(z))^{-1}.
    \end{equation} 
    It is well-known
    %
    %
    that the poles of $G(z)$ are precisely the values $z\in\mathbb C$ that satisfy the characteristic equation $\det(\mathbb I - D \hat W(z))=0$~\citep{ogata1995discrete}. These values characterize the modes of the linear recurrence system, and therefore correspond to the eigenvalues $\lambda$ governing the asymptotic behavior of the gradient dynamics.
\end{proof}

\begin{proof}[Proof of Proposition \ref{prop:hornn_instability}]
We proceed by contradiction. Suppose the system is robustly marginally stable, i.e., all roots $\lambda$ of the characteristic equation \eqref{eq:HORNN_det_0} satisfy $|\lambda| = 1$ for every $D \in \mathcal{D}$.

Consider the subset $\Omega = \{D \in \mathcal{D} \mid D = \gamma \mathbb{I}, \, \gamma \, \in \, \mathbb R_{\neq 0} \}$ and the auxiliary domain $\overline \Omega = \{D^{-1} \mid D \in \Omega\}$. Since $\Phi'$ takes values arbitrarily close to 0, 
it follows that $\overline \Omega$ contains scalar matrices of the form $\Delta = \delta \mathbb{I}$ for arbitrarily large (in norm) $|\delta| > 0$.
Note that all matrices $D \in \Omega$ are invertible. Taking $\Delta = D^{-1}$ we can re-write the characteristic equation \eqref{eq:HORNN_det_0} for such matrices
\begin{equation}
    \det(I - D\hat{W}(\lambda)) = \det(D(\Delta - \hat{W}(\lambda))) = \det(D)\det(\Delta - \hat{W}(\lambda)) = 0.
\end{equation}
Since $D$ is nonsingular, this equation is equivalent to $\det(\Delta - \hat{W}(\lambda)) = 0$.

Recall that $\hat W(\lambda) = \sum_{k=1}^d  W_k \lambda^{-k}$. Now, consider the unit circle $\mathbb{S}^1 = \{z \in \mathbb{C} : |z| = 1\}$, and let us define $\rho_{\hat{W}}(\lambda) = \max_{i} |\mu_i(\hat{W}(\lambda))|$ as the spectral radius of the polynomial matrix $\hat{W}(\lambda)$, where $\mu_i$ denotes its eigenvalues. 
Since $\hat{W}(\lambda)$ is a finite polynomial in $\lambda^{-1}$, it is continuous on $\mathbb{S}^1$. Because $\mathbb{S}^1$ is compact, $\rho_{\hat{W}}(\lambda)$ achieves a maximum value $M := \sup_{\lambda \in \mathbb{S}^1} \rho_{\hat{W}}(\lambda) < \infty$. We can thus choose a matrix $\Delta^* = \delta \mathbb{I} \in \Omega$ such that $\delta > M$, and take $D^*=(\Delta^*)^{-1}\in\mathcal D$. The characteristic equation becomes $\det(\delta \mathbb{I} - \hat{W}(\lambda)) = 0$, implying that $\lambda$ is a root if and only if $\delta$ is an eigenvalue of $\hat{W}(\lambda)$.

Under the assumption of robust marginal stability, there must exist a root $\lambda_0$ on the unit circle (i.e., $|\lambda_0|=1$). Then, $\delta$ is an eigenvalue of $\hat W(\lambda_0)$, which implies that
\begin{equation}
    \delta \le \rho_{\hat{W}}(\lambda_0) \le M.
\end{equation}
This contradicts the choice $\delta > M$. Therefore, there exists a matrix $D = (\Delta^*)^{-1}\in\mathcal D$ such that the characteristic equation has no roots on the unit circle. Consequently, the system is not robustly marginally stable.
\end{proof}

\begin{proof}[Proof of Theorem \ref{thm:Stable-Neutral-Mixing}]
    The second-order system~\eqref{eq:SpatialRNNgradients} has characteristic equation
    \begin{equation}
        \det(z^2\mathbb{I} - z(\overline{W} + B D_{m-1}^t W_{\psi}) - \overline{\overline{W}}) = 0.
    \end{equation}
    By assumption, matrices ($\overline{W}, \overline{\overline{W}}, B,W_\psi$) have block upper-triangular structure with the same block partition $(n_1,\ldots,n_k)$. It thus follows that the matrix $M(z) := z^2\mathbb I - z(\overline W + B D_{m-1}^t W_\psi) - \overline{\overline W}$ is also block upper-triangular. Therefore, 
    \begin{equation}\label{eq:polesGzEqprod}
        \det(M(z)) = \prod_{i=1}^k \det(M_i(z)),
    \end{equation}
    where $M_i(z)$ denotes the $i$th diagonal block of $M(z)$.
    Using the assumption $\overline{\overline W}_i = -\alpha_i \mathbb I_{n_i}$, Eq.~\eqref{eq:polesGzEqprod} implies that $\det(M(z)) = 0$ if and only if
    \begin{equation}\label{eq:blk_diag_eigval_equation}
        \det(z^2\mathbb I - z(\overline W_i + B_i D_{m-1}^t W_{\psi, i}) + \alpha_i \mathbb I_{n_i}) = 0, \,\, \text{for some } i\in\{1,\ldots,k\}.
    \end{equation}
    Therefore, the set of eigenvalues of the full system $M(z)$ is equal to the union of the eigenvalues of each subsystem $M_i(z)$.
    
    For each block $i$, let $N_{i} = \overline{W}_i + B_i D_{m-1}^t W_{\psi, i}$. Since $\overline{W}_i$ is symmetric, $D_{m-1}^t$ is diagonal, and $W_{\psi, i} = \Sigma_i B_i^\top$ (with $\Sigma_i$ the $i$th diagonal block of $\Sigma$), $N_{i}$ is a real symmetric matrix. It thus holds that $N_i$ is diagonalizable and has real eigenvalues $\mu_j$, $j=1,\dots,n_i$. We can thus decouple the characteristic equation \eqref{eq:blk_diag_eigval_equation} into $n_i$ independent quadratic equations:
    \begin{equation}\label{eq:zeqsproof}
        z^2 - \mu_j z + \alpha_i = 0, \quad \text{for } j=1,\dots,n_i.
    \end{equation}
    The roots of these quadratic equations are given by
    \begin{equation}
        z_{j, \pm} = \frac{\mu_j \pm \sqrt{\mu_j^2 - 4\alpha_i}}{2}.
    \end{equation}
    Given that $D_{m-1}^t \in \mathcal D$, we have $\|D_{m-1}^t\|_2\leq \overline{\Phi'}$. 
    Therefore, 
    \begin{align}
        \|\overline W_i\|_2 + \overline{\Phi'}\|B_i\|_2\|W_{\psi,i}\|_2 < 2\sqrt{\alpha_i} \\
         \|\overline W_i + B_iD_{m-1}^tW_{\psi,i}\|_2 < 2\sqrt{\alpha_i} \\
          |\mu_j| < 2\sqrt{\alpha_i}.
    \end{align}
    
    This condition ensures that the discriminant of Eq.~\eqref{eq:zeqsproof} is negative (i.e., $\mu_j^2 - 4 \alpha_i < 0$), which ensures that the roots $z_{j, \pm}$ are simple and have a nonzero imaginary part. Moreover, the absolute values of the roots can be seen to lie on the complex circle of radius $\sqrt{\alpha_i}$,
    \begin{equation}
        |z_{j, \pm}|^2 = \left(\frac{\mu_j}{2}\right)^2 + \left(\frac{\sqrt{4\alpha_i -\mu_j^2}}{2}\right)^2  = {\alpha_i},
    \end{equation}
    for any matrix $D_{m-1}^t\in\mathcal D$.
\end{proof}


\subsection{Derivation of the SpatialRNN Gradient Dynamics}\label{apx:spatialRNN_gradients}

We fix the reference time $t$ and define the Jacobian matrices $G_{m}^t = \frac{\partial h_{t+m}}{\partial h_t}$ and $\Psi^t_m = \frac{\partial \psi_{t+m}}{\partial h_t}$. Using the chain rule, it follows that the gradients of the SpatialRNN \eqref{eq:SpatialRNN} are given by the coupled LTV recurrence equations:
\begin{equation}
    \begin{cases}
    G_{m}^t = D_{m}^t W_\psi \Psi^t_m,\\
   \Psi^t_m = \overline{W} \Psi^t_{m-1} + \overline{\overline{W}} \Psi^t_{m-2} + B G_{m-1},  \quad m\geq 2,
    \end{cases}
\end{equation}
where $\Psi^t_{1} = B$, $\Psi^t_0 = 0$, and $D_{m}^t = \text{diag}\left(\Phi'\left(W_{x}x_{t+m} + W_{\psi}\psi_{t+m} + b_{h}\right)\right)$.
Substituting the first equation into the second one, we obtain
\begin{align*}
\Psi^t_m &= \overline{W}\Psi^t_{m-1} + \overline{\overline{W}}\Psi^t_{m-2} + B (D_{m-1}^t W_\psi \Psi^t_{m-1}) \\
&= (\overline{W} + B D_{m-1}^t W_\psi) \Psi^t_{m-1} + \overline{\overline{W}}\Psi^t_{m-2},
\end{align*}
which can be written in matrix form as Eq.~\eqref{eq:SpatialRNNgradients}.

\subsection{Robustness of the Gradient Dynamics for $\infty$-order RNNs} \label{apx:spatialrnn-hornn}

The following result obtains the gradient dynamics for an $\infty$-order RNN whose recurrence matrices are determined by Eq.~\eqref{eq:recurrence}. Following Theorem \ref{thm:SpatialRNN_HORNN}, such $\infty$-order RNN is equivalent to a SpatialRNN, and hence we can directly prove that its gradient dynamics are equivalent to those of the SpatialRNN.

\smallskip
\begin{proposition}\label{prop:Gz_margStab_det_0}
Let $\{G_m^t\}_{m\geq0}$ be the gradient sequence defined by the HORNN recurrence \eqref{eq:HORNNgrad}, with delay transition matrices $W_k = W_\psi\tilde W_k$ such that $\{ \tilde W_k\}_{k=1}^\infty$ satisfy the relation \eqref{eq:recurrence} and $W_\psi$ is invertible. For any frozen-time matrix $D\in\mathcal D$, $\lambda\in\mathbb C$ is an eigenvalue of the recurrence system \eqref{eq:HORNNgrad} if and only if
    \begin{equation}\label{eq:polesGzEq}
        \det(\lambda^2\mathbb I - \lambda(\overline W + B D W_\psi) - \overline{\overline W}) = 0.
\end{equation}
\end{proposition}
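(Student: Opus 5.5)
We reduce the statement to Proposition~\ref{prop:HORNN_det_0}, which is already proved for finite $d$ via the $z$-transform. The same argument extends verbatim to $d=\infty$ wherever the series $\hat W(z)=\sum_{k\ge1}W_kz^{-k}$ converges, i.e.\ for $|z|$ larger than the growth rate of the $W_k$. By that argument, $\lambda$ is an eigenvalue (pole of $\hat G$) if and only if $\det(\mathbb I - D\hat W(\lambda))=0$, with the understanding that $\hat W$ is extended to its rational continuation. The task is therefore to compute $\hat W(z)$ in closed form and show that the resulting determinant condition is equivalent to \eqref{eq:polesGzEq}.

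\textbf{Step 1: closed form of $\hat{\tilde W}(z)$.} Set $\hat{\tilde W}(z)=\sum_{k\ge1}\tilde W_k z^{-k}$. Multiply the recurrence \eqref{eq:recurrence} by $z^{-k}$ and sum over $k\ge3$. Using $\tilde W_1=B$ and $\tilde W_2=\overline W B$, this gives
\begin{equation*}
\hat{\tilde W}(z)-Bz^{-1}-\overline W B z^{-2} = \overline W z^{-1}\big(\hat{\tilde W}(z)-Bz^{-1}\big) + \overline{\overline W} z^{-2}\hat{\tilde W}(z).
\end{equation*}
The $\overline W B z^{-2}$ terms cancel, leaving $(\mathbb I - \overline W z^{-1}-\overline{\overline W} z^{-2})\hat{\tilde W}(z)=Bz^{-1}$. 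Equivalently, $\hat{\tilde W}(z)=z\,P(z)^{-1}B$ with $P(z):=z^2\mathbb I - z\overline W-\overline{\overline W}$. Theorem~\ref{thm:SpatialRNN_HORNN} also motivates this as the transfer function of the LDE. Consequently $\hat W(z)=W_\psi\, z P(z)^{-1}B$.

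\textbf{Step 2: determinant manipulation.} We have $\det(\mathbb I - D W_\psi zP(z)^{-1}B)$. Sylvester's identity $\det(\mathbb I-XY)=\det(\mathbb I-YX)$ turns this into $\det(\mathbb I - zP(z)^{-1}BDW_\psi)$, which equals $\det(P(z))^{-1}\det\big(P(z)-zBDW_\psi\big)$. Finally, $P(z)-zBDW_\psi = z^2\mathbb I - z(\overline W+BDW_\psi)-\overline{\overline W}$, which is exactly the matrix in \eqref{eq:polesGzEq}.

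\textbf{Step 3: main obstacle.} The delicate point is the passage from ``zeros of $\det(\mathbb I-D\hat W)$'' to ``zeros of $\det(P-zBDW_\psi)$''. The factor $\det P(z)^{-1}$ could in principle cancel zeros or introduce poles. We would handle this through the state-space realization of the recurrence rather than through the raw transfer function. Indeed, by Theorem~\ref{thm:SpatialRNN_HORNN} together with the derivation in Appendix~\ref{apx:spatialRNN_gradients}, the frozen-time HORNN gradient recursion with these $W_k$ is exactly $G_m=DW_\psi\Psi_m$, where $\Psi_m$ follows \eqref{eq:SpatialRNNgradients} with $D_{m-1}^t\equiv D$. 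Invertibility of $W_\psi$, together with that of $D$ for $D$ with nonzero entries, makes $G_m\mapsto\Psi_m$ injective. Hence the modes of $G$ coincide with the eigenvalues of the companion matrix, whose characteristic polynomial is $\det(\lambda^2\mathbb I-\lambda(\overline W+BDW_\psi)-\overline{\overline W})$ by the standard block companion determinant identity. The $z$-transform computation of Steps 1--2 then serves as a consistency check: it shows that $\hat G(z)=(\mathbb I-D\hat W(z))^{-1}$ has its poles among these roots. Any case with singular $D$ would be handled by continuity, or excluded as in the proof of Proposition~\ref{prop:hornn_instability}.
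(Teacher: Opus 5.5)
Your Steps 1--2 are the paper's proof. The paper derives $\hat W(z)=W_\psi(\mathbb I-z\overline W-z^2\overline{\overline W})^{-1}Bz$ with $z=\lambda^{-1}$, clears the inverse inside $\det(\mathbb I-D\hat W)$, and simply asserts that the two determinant equations have the same roots. The issue you flag in Step 3 is therefore a hole in the paper's argument too, and it is not hypothetical. Take $n=2$, $\overline W=\operatorname{diag}(0,\tfrac12)$, $\overline{\overline W}=-\mathbb I$, $B=W_\psi=\mathbb I$, $D=\tfrac12\mathbb I$; this is admissible for $\Phi=\tanh$ and satisfies the hypotheses of Theorem~\ref{thm:Stable-Neutral-Mixing}. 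The factor $\lambda^2-\tfrac12\lambda+1$ divides both $\det(P(\lambda)-\lambda BDW_\psi)$ and $\det P(\lambda)$ and cancels, leaving $\det(\mathbb I-D\hat W(\lambda))=(\lambda^2-\lambda+1)/(\lambda^2+1)$. At the roots of $\lambda^2-\tfrac12\lambda+1$ this equals $-1$. Yet those roots satisfy \eqref{eq:polesGzEq} and are genuine poles of $\hat G_{11}=(\lambda^2+1)/(\lambda^2-\tfrac12\lambda+1)$. So the reduction in your first paragraph (eigenvalue iff $\det(\mathbb I-D\hat W(\lambda))=0$) has to be dropped for $d=\infty$, and the realization must carry the whole proof.

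The realization you propose is exact. With $\Psi_m:=\sum_{k=1}^{m}\tilde W_kG_{m-k}$ one has $G_m=DW_\psi\Psi_m$ for $m\ge1$. The recurrence for $\tilde W_k$ then gives $\Psi_m=(\overline W+BDW_\psi)\Psi_{m-1}+\overline{\overline W}\Psi_{m-2}$ with $\Psi_1=B$, $\Psi_0=0$. Hence $\hat G(z)=\mathbb I+[DW_\psi\;\;0]\,(z\mathbb I-\mathcal A_D)^{-1}\bigl[\begin{smallmatrix}B\\0\end{smallmatrix}\bigr]$, where $\mathcal A_D$ is the companion matrix of \eqref{eq:SpatialRNNgradients} with $D_{m-1}^t\equiv D$.

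What does not follow is your step from injectivity of $DW_\psi$ to ``the modes of $G$ coincide with the eigenvalues of the companion matrix.'' Injectivity is an observability property. For every root of \eqref{eq:polesGzEq} to appear as a mode of $G$, each eigenvalue of $\mathcal A_D$ must also be excited by the initial block $\bigl[\begin{smallmatrix}B\\0\end{smallmatrix}\bigr]$. For $B=0$, or for $D=0$ (admissible whenever $\Phi'_{\min}=0$), one gets $G_m=0$ for all $m\ge1$ and $\hat G\equiv\mathbb I$ with no poles, while \eqref{eq:polesGzEq} has $2n$ roots. Continuity in $D$ cannot repair this, because the pole set jumps at $D=0$.

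To close the argument you must fix what ``eigenvalue'' means for $d=\infty$.
\begin{enumerate}[leftmargin=*]
\item \emph{Eigenvalues of the realization.} Take them to be the eigenvalues of $\mathcal A_D$, the natural analogue of the block companion matrix used for finite $d$. Then the proof is the realization plus the identity $\det(\lambda\mathbb I_{2n}-\mathcal A_D)=\det(\lambda^2\mathbb I-\lambda(\overline W+BDW_\psi)-\overline{\overline W})$, valid for every $D\in\mathcal D$. Steps 1--2 then only show that the poles of $\hat G$ lie among these roots.
\item \emph{Poles of $\hat G$.} Here you need an extra hypothesis, e.g.\ $B$, $D$ and $\overline{\overline W}$ invertible. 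Then $\bigl[\begin{smallmatrix}B & (\overline W+BDW_\psi)B\\ 0 & B\end{smallmatrix}\bigr]$ has rank $2n$, so every mode is excited. The only possibly unobservable states, stacked vectors $(0,v)$ with $\overline{\overline W}v=0$, are excluded. The realization is therefore minimal, and its poles are exactly the eigenvalues of $\mathcal A_D$.
\end{enumerate}
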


\begin{proof} 
Recall that $\hat W(\lambda) = \sum_{k=1}^\infty  W_k \lambda^{-k} = \sum_{k=1}^\infty  W_{\psi} \tilde W_k \lambda^{-k} $ and let us define $z=\lambda^{-1}$. Since $W_{\psi}$ is invertible,  we can factor it out of $\hat W(z)=\sum_{k=1}^{\infty} W_{\psi} \tilde W_k z^k$ as follows:
\begin{align}
W_\psi^{-1} \hat W(z) &= \tilde W_1 z + \tilde W_2 z^2 + \sum_{k=3}^{\infty} \tilde W_k z^k \\
W_\psi^{-1} \hat W(z) &= \tilde W_1 z + \tilde W_2 z^2 + \sum_{k=3}^{\infty} (\overline{W}\tilde W_{k-1} + \overline{\overline{W}}\tilde W_{k-2}) z^k \\
W_\psi^{-1} \hat W(z) &= \tilde W_1 z + \tilde W_2 z^2 + \overline{W}z \sum_{k=3}^{\infty} \tilde W_{k-1} z^{k-1} + \overline{\overline{W}}z^2 \sum_{k=3}^{\infty} \tilde W_{k-2} z^{k-2} \\
W_\psi^{-1} \hat W(z) &= \tilde W_1 z + \tilde W_2 z^2 + z\overline{W}\hat W(z) - z^2\overline{W}\tilde W_1 + z^2\overline{\overline{W}}\hat W(z) \\
W_\psi^{-1} \hat W(z) \left( \mathbb{I} - z\overline{W} - z^2\overline{\overline{W}} \right) &= \tilde W_1 z + (\tilde W_2 - \overline{W}\tilde W_1) z^2 \\
W_\psi^{-1} \hat W(z) \left( \mathbb{I} - z\overline{W} - z^2\overline{\overline{W}} \right) &= B z + \cancel{(\overline{W}B - \overline{W}B)} z^2 \\
\hat W(z) &= W_\psi (\mathbb{I} - z\overline{W} - z^2\overline{\overline{W}})^{-1} B z.
\label{eq:Wz_PDE}
\end{align}

From Eq.~\eqref{eq:HORNN_det_0}, we have that the poles of $G(z)$ are the solutions to $\det(\mathbb I - D\hat W(z)) = 0$. Substituting Eq.~\eqref{eq:Wz_PDE} into it, we have that
\begin{align*}
    \det(\mathbb{I} - D W_{\psi} (\mathbb{I} - z\overline{W} - z^2\overline{\overline{W}})^{-1} B z) &= 0 \\
     \det((\mathbb{I} - z\overline{W} - z^2\overline{\overline{W}}) - B D W_{\psi} z) &= 0 \\
     \det(\mathbb{I} - z(\overline{W} + B D W_{\psi}) - z^2\overline{\overline{W}}) &= 0,
\end{align*}
for every $D\in\mathcal D$.
\end{proof}

\subsection{Emergence of Marginal Stability During SpatialRNN Training}
\label{app.emergencealpha}

Theorem~\ref{thm:Stable-Neutral-Mixing} shows that setting $\alpha_i = 1$ is a required condition to guarantee marginal stability within the $i$-th block. Importantly, $\alpha_i$ can either be fixed a priori or treated as a trainable parameter. In the latter case, we show empirically that the SpatialRNN model can spontaneously learn to set $\alpha\approx 1$ during BPTT training. Fig.~\ref{fig:alpha_evolution} illustrates the evolution of the decay parameters $\alpha_i$ alongside the model's accuracy over the training period. Early in the training, the model fails to capture long-range dependencies, remaining stuck at chance-level performance. The learning dynamics only exhibit a sharp transition when gradient descent pushes at least one parameter (in this case, $\alpha_4$) close to $1$.

\begin{figure}[h]
    \centering
    \includegraphics[width=0.5\linewidth]{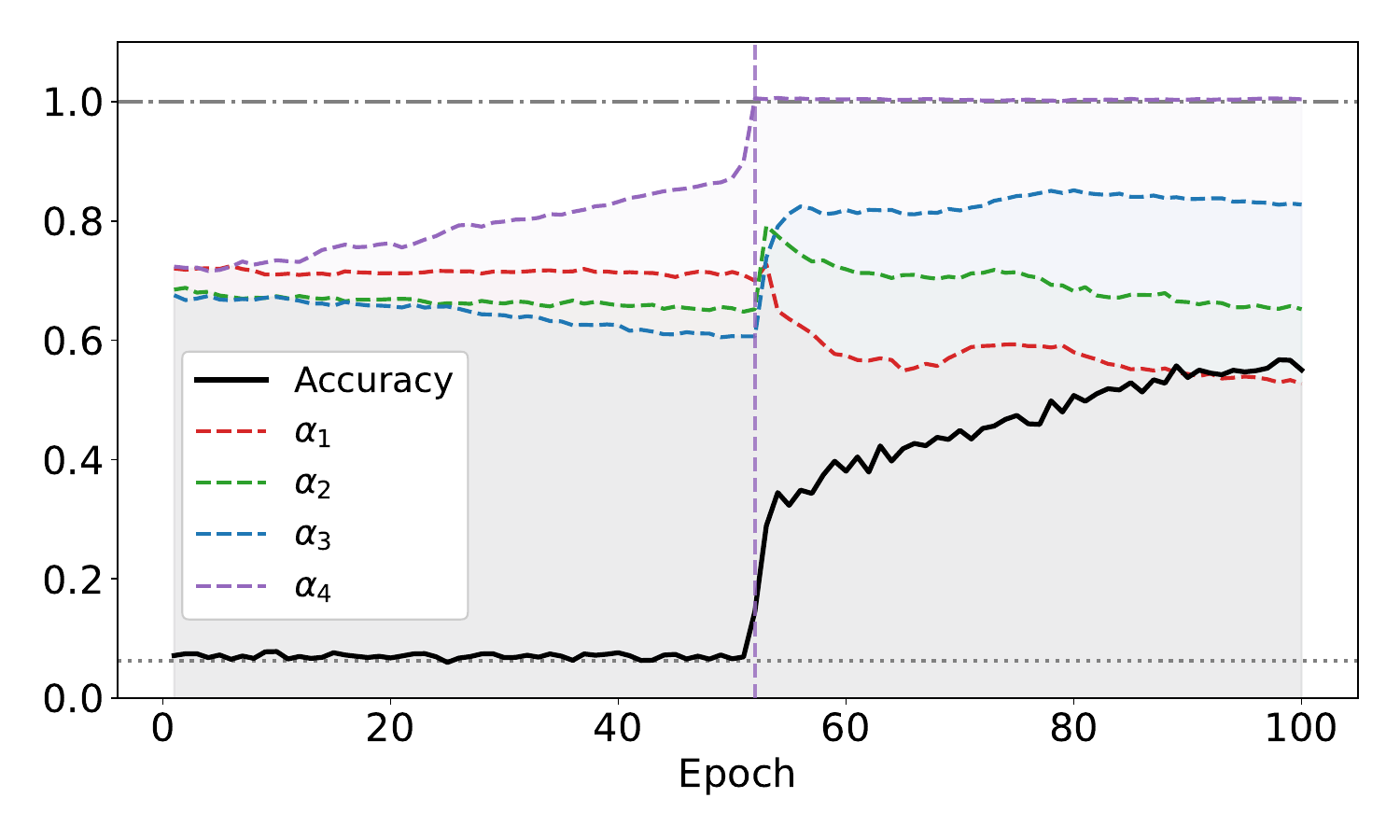}
    \caption{Evolution of the parameters $\{\alpha_i\}_{i=1}^k$ during training on the copy task with sequence length $T=128$. The dashed lines show the learned values of $\alpha_i$ for a SpatialRNN architecture with block sizes [8, 8, 8, 8], while the solid line indicates training accuracy. Performance remains at chance-level until $\alpha_4 \to 1$, after which learning rapidly progresses.}
    \label{fig:alpha_evolution}
\end{figure}

While the SpatialRNN is expressive enough to learn the marginally stable regime ($\alpha \to 1$), training can be slow when starting from a random initialization. To bypass this issue, we explicitly fix $\alpha_i = 1$ for at least one block in all experimental evaluations.

\subsection{Connection between SpatialRNN and Oscillator Networks}
\label{app.coupledoscillator}
%
%

Enforcing a block-triangular structure in the weight matrices allows us to decompose the SpatialRNN's gradient eigenvalue problem~\eqref{eq:SpatialRNNgradients} into $k$ independent blocks, offering robust control over the moduli of the eigenvalues of each block (Theorem~\ref{thm:Stable-Neutral-Mixing}). This construction allows the modes (eigenvectors) to vary at each timestep, promoting information mixing and higher model expressivity.

An alternative method to ensure robust control of the gradient spectrum is to ensure simultaneous diagonalizability of the medium update matrices $\overline W$ and $\overline{\overline W}$. As we show next, \textit{this condition essentially decouples the wave equation into $n$ independent harmonic oscillators}. In this case, the eigenvectors remain fixed over time, preventing information mixing among modes and thereby limiting expressivity. Thus, we do not employ this approach in our experiments.

\begin{theorem}[Robust SpatialRNN gradients through simultaneous diagonalization]
\label{thm:simultaneous_diag}
Let the medium update matrices $\overline{W}$ and $\overline{\overline{W}}$ be simultaneously diagonalizable such that $\overline{W} = S \overline{\Lambda} S^{-1}$ and $\overline{\overline{W}} = S \overline{\overline{\Lambda}} S^{-1}$ for an invertible matrix $S \in \mathbb{R}^{n \times n}$, with $\overline{\Lambda} = \operatorname{diag}(\mu_1, \dots, \mu_n)$ for $\mu_i\in\mathbb R$ and $\overline{\overline{\Lambda}} = \operatorname{diag}(-\alpha_1, \dots, -\alpha_n)$ for $\alpha_i \in \mathbb{R}_{>0}$. Suppose the following conditions are satisfied:
\begin{enumerate}[leftmargin=*]
    \item $B = S \Lambda_B$ and $W_\psi = \Lambda_\psi S^{-1}$, where $\Lambda_B = \operatorname{diag}(\beta_1, \dots, \beta_n)$ and $\Lambda_\psi = \operatorname{diag}(\gamma_1, \dots, \gamma_n)$.
    \item $|\mu_i| + \overline{\Phi'} |\beta_i \gamma_i| < 2\sqrt{\alpha_i}$ for every $i$th eigenvalue, where $\overline{\Phi'} := \max\{|\Phi'_{\min}|, |\Phi'_{\max}|\}$.
\end{enumerate}

Then, for any linear matrix $D_{m-1}^t \in \mathcal{D}$, the SpatialRNN gradient dynamics decouple into $n$ independent second-order modes. For each mode $i$, the associated characteristic equation yields exactly two simple complex conjugate eigenvalues with a time-invariant modulus $|\lambda| = \sqrt{\alpha_i}$.
\end{theorem}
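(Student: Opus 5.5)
The plan is to change coordinates in the gradient recurrence \eqref{eq:SpatialRNNgradients} using $S$, so that every matrix in the companion system becomes diagonal. Then the characteristic equation splits into $n$ scalar quadratics, and I would repeat the discriminant argument from the proof of Theorem~\ref{thm:Stable-Neutral-Mixing}.

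\emph{Step 1 (coordinate change).} Define $\tilde\Psi_m^t := S^{-1}\Psi_m^t$. Substituting $\overline W = S\overline\Lambda S^{-1}$, $\overline{\overline W} = S\overline{\overline\Lambda}S^{-1}$, $B = S\Lambda_B$ and $W_\psi = \Lambda_\psi S^{-1}$ into \eqref{eq:SpatialRNNgradients} gives
\begin{equation}
\tilde\Psi_m^t = \big(\overline\Lambda + \Lambda_B D_{m-1}^t \Lambda_\psi\big)\tilde\Psi_{m-1}^t + \overline{\overline\Lambda}\,\tilde\Psi_{m-2}^t .
\end{equation}
The companion matrix is similar, through $\operatorname{blkdiag}(S,S)$, to the companion matrix of this transformed system, so the two have the same spectrum.

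\emph{Step 2 (decoupling).} The matrix $\Lambda_B D_{m-1}^t\Lambda_\psi$ is a product of diagonal matrices, so it is diagonal with entries $\beta_i d_i\gamma_i$. Therefore $\overline\Lambda + \Lambda_B D_{m-1}^t\Lambda_\psi = \operatorname{diag}(\nu_1,\ldots,\nu_n)$ with $\nu_i = \mu_i + \beta_i\gamma_i d_i$. The characteristic polynomial $\det\big(z^2\mathbb I - z(\overline\Lambda + \Lambda_BD\Lambda_\psi) - \overline{\overline\Lambda}\big)$ then factors as $\prod_{i=1}^n (z^2 - \nu_i z + \alpha_i)$. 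Equivalently, after a permutation the companion matrix is block diagonal with $2\times 2$ blocks $\begin{pmatrix}\nu_i & -\alpha_i\\ 1 & 0\end{pmatrix}$, one per mode. This gives the claimed decoupling into $n$ independent second-order modes, whose modal directions are fixed by $S$ and do not depend on $D$.

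\emph{Step 3 (root location).} Since $|d_i|\le\overline{\Phi'}$, we have $|\nu_i| \le |\mu_i| + \overline{\Phi'}|\beta_i\gamma_i| < 2\sqrt{\alpha_i}$ by condition 2. So each discriminant $\nu_i^2 - 4\alpha_i$ is negative. The roots $z_{i,\pm} = \big(\nu_i \pm \mathrm{i}\sqrt{4\alpha_i-\nu_i^2}\big)/2$ are therefore distinct complex conjugates with nonzero imaginary part, and $|z_{i,\pm}|^2 = \nu_i^2/4 + (4\alpha_i - \nu_i^2)/4 = \alpha_i$. This holds for every $D_{m-1}^t\in\mathcal D$, so the modulus is time-invariant.

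The main obstacle is making sure that "simple" holds for the full system and not only within each mode. Two different modes $i\neq j$ can share a root, for instance when $\alpha_i=\alpha_j$ and $\nu_i=\nu_j$. I would handle this by stating simplicity per mode, which is what the statement asserts: each $2\times2$ block has two distinct eigenvalues. I would also note that the block-diagonal form makes the whole companion matrix diagonalizable even when roots coincide across modes, since each $2\times 2$ block with distinct eigenvalues is diagonalizable. Semisimplicity on the circle of radius $\sqrt{\alpha_i}$, which is what marginal stability needs when $\alpha_i=1$, therefore holds regardless.
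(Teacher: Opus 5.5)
Your proposal is correct and follows essentially the same route as the paper: you conjugate by $S$ (on the companion matrix via $\operatorname{blkdiag}(S,S)$, whereas the paper does it directly inside $\det(z^2\mathbb I - z(\overline W + B D W_\psi) - \overline{\overline W})$), split into the $n$ quadratics $z^2-(\mu_i+\beta_i\gamma_i d_i)z+\alpha_i=0$, and bound each discriminant with the triangle inequality and $|d_i|\le\overline{\Phi'}$. Your extra observation is one the paper does not make explicit: the full companion matrix remains diagonalizable even when roots coincide across different modes, so semisimplicity holds on each circle. It is a small but worthwhile addition.
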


\begin{proof}
Recall that the characteristic equation of the second-order gradient dynamics of the SpatialRNN~\eqref{eq:SpatialRNNgradients} is given by
\begin{equation}
    \det(z^2 \mathbb{I} - z(\overline{W} + B D_{m-1}^t W_\psi) - \overline{\overline{W}}) = 0.
\end{equation}
Substituting the conditions $\overline{W} = S \overline{\Lambda} S^{-1}$, $\overline{\overline{W}} = S \overline{\overline{\Lambda}} S^{-1}$, $B = S \Lambda_B$, and $W_\psi = \Lambda_\psi S^{-1}$ yields
\begin{equation}
    \det(S [z^2 \mathbb{I} - z(\overline{\Lambda} + \Lambda_B D_{m-1}^t \Lambda_\psi) - \overline{\overline{\Lambda}}] S^{-1}) = 0.
\end{equation}
Since $D_{m-1}^t \in \mathcal{D}$ is diagonal, it commutes with the diagonal matrices $\Lambda_B$ and $\Lambda_\psi$. Let $\tilde{D} = \Lambda_B D_{m-1}^t \Lambda_\psi$, which is a diagonal matrix with entries $\tilde{d}_i = \beta_i \gamma_i d_i$, where $d_i = [D_{m-1}^t]_{ii}$ and $|d_i| \le \overline{\Phi'}$. By the similarity invariance of the determinant, the system fully decouples into $n$ independent quadratic equations:
\begin{equation}
    z^2 - (\mu_i + \beta_i \gamma_i d_i)z + \alpha_i = 0, \quad \text{for } i = 1, \dots, n.
\end{equation}
The roots for each mode $i$ are:
\begin{equation}
    z_{i,\pm} = \frac{(\mu_i + \beta_i \gamma_i d_i) \pm \sqrt{(\mu_i + \beta_i \gamma_i d_i)^2 - 4\alpha_i}}{2}.
\end{equation}
To guarantee simple, complex conjugate roots (i.e., underdamped oscillatory modes), the discriminant must be strictly negative: $(\mu_i + \beta_i \gamma_i d_i)^2 < 4\alpha_i$. Applying the triangle inequality and the activation derivative bound, we have that
\begin{equation}
    |\mu_i + \beta_i \gamma_i d_i| \le |\mu_i| + |\beta_i \gamma_i| |d_i| \le |\mu_i| + \overline{\Phi'} |\beta_i \gamma_i|.
\end{equation}
By assumption, $|\mu_i| + \overline{\Phi'} |\beta_i \gamma_i| < 2\sqrt{\alpha_i}$. Thus, the discriminant is strictly negative for any LTV sequence matrix $D_{m-1}^t \in \mathcal{D}$.
Consequently, the roots are complex conjugates, and their squared magnitude evaluates to
\begin{equation}
    |z_{i,\pm}|^2 = \left(\frac{\mu_i + \beta_i \gamma_i d_i}{2}\right)^2 + \frac{4\alpha_i - (\mu_i + \beta_i \gamma_i d_i)^2}{4} = \alpha_i.
\end{equation}
Therefore, $|z_{i,\pm}| = \sqrt{\alpha_i}$, establishing robust eigenvalue placement for all modes invariant to the time-varying sequence.
\end{proof}

\subsection{LTV Stability Conditions for the SpatialRNN Gradient Dynamics}\label{apx:ltv}

The conditions derived in Theorem~\ref{thm:Stable-Neutral-Mixing} ensure marginal stability of the gradient dynamics for any frozen-time matrix $D\in\mathcal D$. However,
given that the input sequences $\{x_t\}_{t\geq0}$ are unknown and therefore $\{D_m^t\}_{m\geq0}$ can switch arbitrarily, marginal stability of the frozen-time systems is only a necessary, but not sufficient, condition for the marginal stability of the switching LTV gradient equations \eqref{eq:SpatialRNNfullrecurrence}.
%
%
Empirically, we observe that gradient norms can exhibit mild drift over long horizons, although deviations are relatively small, as shown in Fig.~\ref{fig:exploding_grads}.

\begin{figure}[h]
    \centering
    \includegraphics[width=0.5\linewidth]{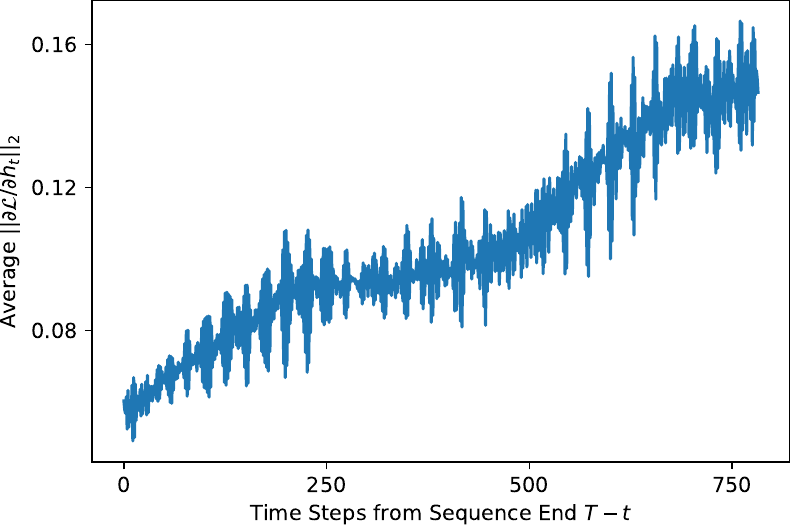}
    \caption{Gradient norm $\| \frac{\partial \mathcal L}{\partial h_{T-t}}\|$ as a function of the time horizon, shown for the psMNIST classification task. Matrix norms are set such that $||B_i||_2 < 0.4, \ ||W_{\psi,i}||_2 < 0.4$ and $||\overline W_{i}||_2 < 1.8$.}
    \label{fig:exploding_grads}
\end{figure}

In order to prevent exploding gradients, we provide further constraints on the SpatialRNN's weight matrices to ensure \textit{asymptotic stability} of the actual LTV gradient dynamics. These results are weaker than Theorem~\ref{thm:Stable-Neutral-Mixing} in the sense that we cannot control the decay rate of the system, nor can we guarantee marginal stability ($\alpha_i=1$), yet they provide concrete bounds on the gradient dynamics.

\begin{theorem}[LTV stability of the SpatialRNN gradients] \label{thm:guas}
Let the matrices $\overline{W}$ and $\overline{\overline{W}}$ be defined as block diagonal matrices with blocks sizes $n_i$, $i=1,\ldots,k$, such that 
\begin{equation}
    \overline{W} = \begin{pmatrix}
            \overline{W}_1 & 0 & 0  \\
            0 & \ddots & 0  \\
            0 & 0 & \overline{W}_k
        \end{pmatrix} \quad \text{and} \quad
        \overline{\overline{W}} = \begin{pmatrix}
            -\alpha_1\mathbb{I}_{n_1} & 0  & 0  \\
            0 & \ddots & 0  \\
            0 & 0 & -\alpha_{k}\mathbb{I}_{n_k}
        \end{pmatrix},
\end{equation}
where $\overline{W}_i\in\mathbb R^{n_i\times n_i}$, $\alpha_i \in (0,1)$, $\sum_{i=1}^k n_i = n$ is the model size.
Suppose the following conditions are satisfied:
    \begin{enumerate}[leftmargin=*]
    \item $\overline{W_i} = \overline{W_i}^\top$, $B = \operatorname{blkdiag}\{B_1,\ldots,B_k\}$, for block matrices $B_i \in \mathbb R^{n_i\times n_i}$, and $W_\psi = \Sigma B^\top = \operatorname{blkdiag}\{W_{\psi,1},\ldots,W_{\psi,k}\}$ for some diagonal matrix $\Sigma \in \operatorname{diag}(\mathbb R^n)$.
    \item $||\overline{W}_i||_2 <  \frac{1+\alpha_i}{\sqrt{\alpha_i}}$ and $\overline{\Phi'} ||B_i||_2 ||W_{\psi,i}||_2 < (1-\alpha_i) \left( \sqrt{1 - \frac{\alpha_i}{(1+\alpha_i)^2} ||\overline{W}_i||_2^2} - \frac{||\overline{W}_i||_2}{1+\alpha_i} \right) \ \forall i$, where $\overline{\Phi'} := \max\{|\Phi'_{\min}|, |\Phi'_{\max}|\}$.
\end{enumerate}
 Then, the LTV gradient dynamics $\Psi_m^t$ are uniformly asymptotically stable for any matrix sequence $\{D_{m}^t\}_{m\geq 0}, \ D_{m}^t \in \mathcal{D}$.
\end{theorem}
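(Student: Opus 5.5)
The plan is to exhibit a single quadratic Lyapunov function, independent of $m$ and of $D_m^t$, that decreases strictly along every trajectory of \eqref{eq:SpatialRNNgradients}. Because all matrices are block diagonal, and $B$ and $W_\psi$ are block diagonal with the same partition, the companion recurrence decouples exactly into $k$ independent recurrences, with no upper-triangular coupling terms to handle:
\begin{equation}
\Psi_{m,i} = (\overline W_i + P_{m,i})\Psi_{m-1,i} - \alpha_i \Psi_{m-2,i}, \qquad P_{m,i} := B_i D_{m-1,i}^t W_{\psi,i}.
\end{equation}
By Condition 1 each $P_{m,i}$ is symmetric, and $\|P_{m,i}\|_2 \le p_i := \overline{\Phi'}\|B_i\|_2\|W_{\psi,i}\|_2$. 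It therefore suffices to prove uniform exponential stability of one block. For a single block I write the state transition matrix as $A_m = A_0 + E_m$, where
\begin{equation}
A_0 = \begin{pmatrix} \overline W_i & -\alpha_i \mathbb I \\ \mathbb I & 0\end{pmatrix}, \qquad E_m = \begin{pmatrix} P_{m,i} & 0 \\ 0 & 0\end{pmatrix}.
\end{equation}

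\textbf{Step 1 (nominal contraction).} I will find a constant positive definite $Q$ and a rate $q<1$ with $A_0^\top Q A_0 \preceq q^2 Q$. Equivalently, $\|A_0\|_Q \le q$ in the induced norm $\|v\|_Q = \sqrt{v^\top Q v}$. Since $\overline W_i = U\Lambda U^\top$ is symmetric, I take $Q$ to be a function of $\overline W_i$ (so $Q$ commutes with the orthogonal change of basis $\operatorname{blkdiag}(U,U)$). This reduces the problem to $2\times 2$ companion matrices $\begin{pmatrix}\mu & -\alpha_i\\ 1 & 0\end{pmatrix}$ with $|\mu|\le w_i := \|\overline W_i\|_2$. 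I will try a $Q$ of the form $\begin{pmatrix} \mathbb I & -c\,\mathbb I\\ -c\,\mathbb I & \alpha_i \mathbb I\end{pmatrix}$ (and variants), with $c$ tuned using $1+\alpha_i$ and $w_i$. The first inequality in Condition 2, $w_i<(1+\alpha_i)/\sqrt{\alpha_i}$, is exactly what should make such a $Q$ positive definite and strictly decreasing uniformly over $|\mu|\le w_i$. Note that the naive diagonal scaling $Q=\operatorname{diag}(1,\alpha_i)$ only works for $w_i<1-\alpha_i$, which is why the off-diagonal term $c$ is needed.

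\textbf{Step 2 (perturbation and LTV conclusion).} For the perturbation I will bound $\|E_m\|_Q \le \kappa(Q)\,p_i$, where $\kappa$ is an explicit constant depending on $Q$. This gives
\begin{equation}
\|A_m\|_Q \le q + \kappa(Q)\,p_i =: r_i,
\end{equation}
and I need $r_i < 1$. Since $r_i<1$ holds for every $D \in \mathcal D$ and every $m$, the products satisfy
\begin{equation}
\|A_{t+m-1}\cdots A_{t+1}\|_Q \le r_i^{\,m-1}
\end{equation}
for every switching sequence. This is uniform exponential, hence uniform asymptotic, stability of $\Psi_m^t$. Taking the maximum of $r_i$ over $i$ handles all blocks.

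\textbf{Main obstacle.} The hard step is choosing $Q$ so that the margin $1-q$, divided by $\kappa(Q)$, equals the stated threshold
\begin{equation}
(1-\alpha_i)\Bigl(\sqrt{1 - \tfrac{\alpha_i}{(1+\alpha_i)^2}w_i^2} - \tfrac{w_i}{1+\alpha_i}\Bigr).
\end{equation}
The square-root term looks like the smallest eigenvalue, or a singular value, of a $2\times 2$ matrix parameterized by $w_i/(1+\alpha_i)$. The factor $1-\alpha_i$ looks like the dissipation margin of the nominal system at $\mu=0$. I would first compute, symbolically, the optimal $c$ minimizing $q$ for the worst case $|\mu|=w_i$, and then check whether the resulting expression for $(1-q)/\kappa$ reduces to this threshold. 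If exact matching fails, the fallback is to prove the weaker result with whatever explicit bound comes out of this computation. The overall structure (decoupling, common Lyapunov function, small-gain on $E_m$) is unchanged either way.
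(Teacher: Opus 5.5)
Your block decoupling and the use of one $D$-independent quadratic Lyapunov function per block are right. Step~2, however, cannot yield the stated threshold, and the fallback you mention would prove a strictly weaker statement, not this theorem.

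The loss is in the triangle inequality $\|A_0+E_m\|_Q\le\|A_0\|_Q+\|E_m\|_Q$, and it occurs for every $Q\succ0$:
\begin{itemize}
\item You always have $q\ge\|A_0\|_Q\ge\rho(A_0)\ge\sqrt{\alpha_i}$, since the two roots of $z^2-\mu z+\alpha_i$ have product $\alpha_i$.
\item $P_{m,i}=\pm p_i\mathbb I$ is admissible. Take e.g.\ $B_i=\mathbb I$, $\Sigma_i=s\,\mathbb I$, and $D$ the scalar matrix whose entry is whichever of $\Phi'_{\min},\Phi'_{\max}$ has modulus $\overline{\Phi'}$. Hence any uniform constant obeys $\kappa(Q)\ge\rho(E_m)/p_i=1$.
\end{itemize}
So the criterion $q+\kappa p_i<1$ can never certify more than $p_i<1-\sqrt{\alpha_i}$. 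With $\overline W_i=0$, the hypotheses allow every $p_i<1-\alpha_i$. On the nonempty range $1-\sqrt{\alpha_i}\le p_i<1-\alpha_i$ the theorem correctly asserts uniform asymptotic stability, yet no choice of $Q$ closes your argument.

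The reason is structural. At $\overline W_i=0$ and $v_2=0$, the vectors $A_0v=(0,v_1)$ and $E_mv=(P_{m,i}v_1,0)$ are orthogonal in the inner product of $Q=\operatorname{diag}(\mathbb I,\alpha_i\mathbb I)$. Thus $\|A_mv\|_Q^2=\|P_{m,i}v_1\|^2+\alpha_i\|v_1\|^2$: the two contributions add in quadrature, whereas you add them linearly.

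What is missing is to evaluate the decrement for the full perturbed $A_m$ rather than splitting it. Take
\[
Q_i=\begin{pmatrix}\mathbb I & -\gamma_i\overline W_i\\ -\gamma_i\overline W_i & \alpha_i\mathbb I\end{pmatrix},\qquad \gamma_i=\frac{\alpha_i}{1+\alpha_i},
\]
i.e.\ $c=\gamma_i\mu$ eigenvalue by eigenvalue. A scalar $c$ cannot cancel the cross term $c(1+\alpha_i)-\alpha_i\mu$ for eigenvalues of both signs. That is why your $-c\,\mathbb I$ ansatz does no better than the diagonal choice in the worst case over $\|\overline W_i\|_2\le w_i$.

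With this $Q_i$ the argument goes as follows:
\begin{itemize}
\item By a Schur complement, $Q_i\succ0$ iff $\|\overline W_i\|_2<(1+\alpha_i)/\sqrt{\alpha_i}$.
\item The $\overline W_i$ terms in the off-diagonal block of $A_m^\top Q_iA_m-Q_i$ cancel exactly, leaving
\[
A_m^\top Q_iA_m-Q_i=\begin{pmatrix}S_{11} & -\alpha_iP_{m,i}\\ -\alpha_iP_{m,i} & -\alpha_i(1-\alpha_i)\mathbb I\end{pmatrix},\qquad S_{11}=P_{m,i}^2+\tfrac{1}{1+\alpha_i}\bigl(\overline W_iP_{m,i}+P_{m,i}\overline W_i\bigr)+\tfrac{1-\alpha_i}{1+\alpha_i}\overline W_i^2-(1-\alpha_i)\mathbb I .
\]
\item A Schur complement on the $(2,2)$ block, followed by norm bounds, reduces negativity to the scalar condition $\frac{x^2}{1-\alpha_i}+\frac{2w_i}{1+\alpha_i}x+\frac{1-\alpha_i}{1+\alpha_i}w_i^2-(1-\alpha_i)<0$ with $x=\|P_{m,i}\|_2$. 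Its positive root is exactly the threshold in Condition~2.
\end{itemize}

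Note also that the first inequality of Condition~2 only gives $Q_i\succ0$, not strict decrease. For this $Q_i$, the unperturbed decrement is negative definite iff $w_i^2<1+\alpha_i$. That bound follows from the requirement that the right-hand side of the second inequality be positive. For $1+\alpha_i\le w_i<(1+\alpha_i)/\sqrt{\alpha_i}$, the unperturbed block can even be unstable.
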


\begin{proof}
The block-diagonal structure of the parameter matrices naturally decouples the gradient dynamics into $k$ independent LTV subsystems. For the $i$-th block, the recurrence is given by $x^{(i)}_{m} = A_{m,i} x^{(i)}_{m-1}$, where $x^{(i)}_{m} = \begin{bmatrix} \Psi_{m,i}^t \\ \Psi_{m-1,i}^t \end{bmatrix}$, $A_{m,i} = \begin{bmatrix} W_{m,i} & -\alpha_i I \\ I & 0 \end{bmatrix}$, $W_{m,i} = \overline{W}_i + \tilde{W}_{m,i}$, and $\tilde{W}_{m,i} = B_i D_{m-1}^t W_{\psi,i}$.
By definition, the spectral norm of the perturbation is bounded: $\|\tilde{W}_{m,i}\|_2 \le \tilde{w}_i$ for any $D_{m-1}^t \in \mathcal{D}$.

To guarantee stability under arbitrary sequential variations, we construct a time-invariant quadratic Lyapunov function for each LTV subsystem: $V_i(x) = x^\top P_i x$. If a fixed positive-definite matrix $P_i$ exists such that the energy difference $\Delta V_{i,m} = A_{m,i}^\top P_i A_{m,i} - P_i$ is strictly negative definite at every step, uniform stability is guaranteed~\citep{liberzon2003switching}. We define $P_i$ as
\begin{equation}
    P_i = \begin{bmatrix} I & -\gamma_i \overline{W}_i \\ -\gamma_i \overline{W}_i & \alpha_i I \end{bmatrix}, \quad \text{where} \quad \gamma_i = \frac{\alpha_i}{1+\alpha_i}.
\end{equation}
By the Schur complement, $P_i \succ 0$ if and only if $\alpha_i I - \gamma_i^2 \overline{W}_i^2 \succ 0$. This holds precisely when $\|\overline{W}_i\|_2 < \frac{1+\alpha_i}{\sqrt{\alpha_i}}$, which is imposed by condition 2.

Next, we evaluate the difference $\Delta V_{i,m}$. Expanding $A_{m,i}^\top P_i A_{m,i}$ and substituting $\gamma_i$ forces the linear $\overline{W}_i$ terms in the off-diagonal blocks to perfectly cancel out, yielding:
\begin{equation}
    \Delta V_{i,m} = \begin{bmatrix} S_{11} & -\alpha_i \tilde{W}_{m,i} \\ -\alpha_i \tilde{W}_{m,i} & -\alpha_i(1-\alpha_i) I \end{bmatrix},
\end{equation}
where $S_{11} = \tilde{W}_{m,i}^2 + \frac{1}{1+\alpha_i}(\overline{W}_i \tilde{W}_{m,i} + \tilde{W}_{m,i} \overline{W}_i) + \frac{1-\alpha_i}{1+\alpha_i} \overline{W}_i^2 - (1-\alpha_i) I$.
Since $\alpha_i \in (0,1)$, the bottom-right block $S_{22} = -\alpha_i(1-\alpha_i) I$ is strictly negative definite. Applying the Schur complement to $\Delta V_{i,m}$, we require $S_{11} - S_{12} S_{22}^{-1} S_{12}^\top \prec 0$. Thus, we have that
\begin{equation}
    \frac{1}{1-\alpha_i}\tilde{W}_{m,i}^2 + \frac{1}{1+\alpha_i}(\overline{W}_i \tilde{W}_{m,i} + \tilde{W}_{m,i} \overline{W}_i) + \frac{1-\alpha_i}{1+\alpha_i}\overline{W}_i^2 - (1-\alpha_i) I \prec 0.
\end{equation}
To enforce strict negative definiteness, we bound the spectral norm. Using the triangle inequality and the property $\|\overline{W}_i \tilde{W}_{m,i} + \tilde{W}_{m,i} \overline{W}_i\|_2 \le 2 \|\overline{W}_i\|_2 \|\tilde{W}_{m,i}\|_2$, we obtain the sufficient scalar inequality
\begin{equation}\label{eq.proof.quadraticeq}
    \frac{1}{1-\alpha_i}\|\tilde{W}_{m,i}\|_2^2 + \frac{2\|\overline{W}_i\|_2}{1+\alpha_i}\|\tilde{W}_{m,i}\|_2 + \frac{1-\alpha_i}{1+\alpha_i}\|\overline{W}_i\|_2^2 - (1-\alpha_i) < 0.
\end{equation}
Multiplying by $(1-\alpha_i)$, we can solve this quadratic inequality for $\|\tilde{W}_{m,i}\|_2$. The following strict upper bound, corresponding to the positive root of Eq.~\eqref{eq.proof.quadraticeq}, guarantees stability:
\begin{equation}
    \|\tilde{W}_{m,i}\|_2 < (1-\alpha_i) \left( \sqrt{1 - \frac{\alpha_i}{(1+\alpha_i)^2}\|\overline{W}_i\|_2^2} - \frac{\|\overline{W}_i\|_2}{1+\alpha_i} \right).
\end{equation}
Condition 2 guarantees this inequality holds, ensuring $\Delta V_{i,m} \prec 0$ uniformly over time. Summing the block-wise Lyapunov functions, $V(x) = \sum_{i=1}^k V_i(x^{(i)})$ forms a global time-invariant Lyapunov function, establishing uniform asymptotic stability for the full LTV system.
\end{proof}

While condition 1 can be relaxed to the block-triangular structure considered in Theorem~\ref{thm:Stable-Neutral-Mixing}, the most significant change is the bound on the matrix norms. From Theorem~\ref{thm:Stable-Neutral-Mixing}, to ensure marginal stability we require $\alpha_i\to1$. However, due to condition 2 of Theorem~\ref{thm:guas}, we would need to impose the constraint $\|B_i\|_2 \|W_{\psi,i}\|_2 \to 0$ for the state-to-medium and medium-to-state matrices. Such constraint would render the SpatialRNN untrainable. Thus, instead we enforce these matrices to have small (but nonvanishing) norms, which completely prevents exploding gradients at the cost of introducing some degree of gradient decay. Consequently, under Theorem \ref{thm:guas}, the LTV gradient dynamics of the SpatialRNN are asymptotically stable, but not necessarily marginally stable.

\textbf{Input-independent volume contraction.}
Although Theorem~\ref{thm:Stable-Neutral-Mixing} is a frozen-time result, part of the structure that it exploits carries over exactly to the LTV gradient dynamics. Since the input-dependent term $B D_{m-1}^t W_\psi$ only enters the first block of the companion matrix in Eq.~\eqref{eq:SpatialRNNgradients}, the determinant of the gradient transition matrix does not depend on the input sequence.

\smallskip
\begin{proposition}[Input-independent volume contraction]\label{prop:det}
Let $\overline{\overline W}$ be defined as in Theorem~\ref{thm:Stable-Neutral-Mixing}, $\mathcal A_m$ denote the companion matrix of the gradient dynamics~\eqref{eq:SpatialRNNgradients} at step $m$, and $\Phi_m := \mathcal A_m\mathcal A_{m-1}\cdots\mathcal A_2$. Then, for any sequence $\{D_m^t\}_{m\geq1}$, with $D_m^t\in\mathcal D$, it holds that
\begin{equation}
    |\det\Phi_m| = \Big(\prod\nolimits_{i=1}^k\alpha_i^{n_i}\Big)^{m-1}, \quad m\geq2.
\end{equation}
In particular, if $\alpha_i=1$ for all $i=1,\ldots,k$, then $\|\Phi_m\|_2\geq 1$ for all $m\geq2$.
\end{proposition}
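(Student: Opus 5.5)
The plan is to compute the determinant of a single companion matrix $\mathcal A_m$ exactly, and then use multiplicativity of the determinant. Write
\begin{equation}
\mathcal A_m = \begin{pmatrix} N_m & \overline{\overline W} \\ \mathbb I & 0 \end{pmatrix}, \qquad N_m := \overline W + B D_{m-1}^t W_\psi .
\end{equation}
Since the lower-left block $\mathbb I$ is invertible and commutes with everything, the block-determinant formula applies. Concretely, we can permute the two block rows, which costs a sign $(-1)^{n^2}=(-1)^n$. This gives
\begin{equation}
\det\begin{pmatrix} \mathbb I & 0 \\ N_m & \overline{\overline W}\end{pmatrix} = \det\overline{\overline W}.
\end{equation}
Hence $|\det\mathcal A_m| = |\det \overline{\overline W}|$, and crucially $N_m$ (and therefore the input-dependent $D_{m-1}^t$) drops out entirely. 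This is the only step with any content, and it is immediate once the block structure is written down.

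Next, we evaluate $\det\overline{\overline W}$ using the block upper-triangular form assumed in Theorem~\ref{thm:Stable-Neutral-Mixing}. Its diagonal blocks are $-\alpha_i\mathbb I_{n_i}$, so
\begin{equation}
\det\overline{\overline W} = \prod_{i=1}^k (-\alpha_i)^{n_i}, \qquad |\det\overline{\overline W}| = \prod_{i=1}^k \alpha_i^{n_i}.
\end{equation}
Note that $\Phi_m=\mathcal A_m\cdots\mathcal A_2$ has $m-1$ factors. Multiplicativity of the determinant then gives the stated identity for every sequence $\{D_m^t\}$ with $D_m^t\in\mathcal D$. No frozen-time argument is needed, because the identity holds factor by factor.

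For the final claim, set all $\alpha_i=1$, so that $|\det\Phi_m|=1$. Let $\sigma_1\ge\dots\ge\sigma_{2n}\ge 0$ be the singular values of $\Phi_m$. Then $\prod_j\sigma_j = |\det\Phi_m| = 1$. If $\sigma_1=\|\Phi_m\|_2$ were $<1$, every singular value would be $<1$ and their product would be $<1$, a contradiction. Hence $\|\Phi_m\|_2\ge1$. Equivalently, $|\det\Phi_m|\le\|\Phi_m\|_2^{2n}$.

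The only point requiring care is the sign and ordering in the block-determinant step. Since the statement is about absolute values, I would simply cite $\det\begin{pmatrix}X&Y\\ \mathbb I&0\end{pmatrix}=\pm\det Y$, and no step should present a real obstacle.
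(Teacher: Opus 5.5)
Your proposal is correct and follows the paper's proof essentially step for step. Both proofs exchange the two block rows (sign $(-1)^n$) to reduce $\det\mathcal A_m$ to $\det\overline{\overline W}=\prod_i(-\alpha_i)^{n_i}$, independent of $D_{m-1}^t$, then apply multiplicativity over the $m-1$ factors, and conclude $\|\Phi_m\|_2\geq 1$ from the fact that the product of the singular values equals $|\det\Phi_m|=1$.
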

\begin{proof}
Exchanging the two block rows of $\mathcal A_m$, each of size $n$, yields a block lower-triangular matrix with diagonal blocks $\mathbb I$ and $\overline{\overline W}$. Since this exchange is a permutation with sign $(-1)^{n}$, and $\overline{\overline W}$ is block upper-triangular with diagonal blocks $-\alpha_i\mathbb I_{n_i}$, it follows that $\det\mathcal A_m = (-1)^n\det\overline{\overline W} = \prod_{i=1}^k\alpha_i^{n_i}$, independently of $D_{m-1}^t$. The first claim then follows from the multiplicativity of the determinant. For the second claim, note that $|\det\Phi_m|$ equals the product of the singular values of $\Phi_m$; if $|\det\Phi_m|=1$, then its largest singular value satisfies $\|\Phi_m\|_2\geq1$.
\end{proof}

Consequently, for $\alpha_i=1$, the LTV gradient dynamics of the SpatialRNN are volume preserving for any input sequence, and the medium gradients cannot vanish along all directions. In contrast, the companion matrix of the HORNN gradient dynamics~\eqref{eq:HORNNgrad} has determinant $\pm\det(D_m^t)\det(W_d)$, which depends on the input and vanishes as the activations saturate. Volume preservation is, however, necessary but not sufficient for marginal stability, as it does not preclude growth along some directions compensated by decay along others.

\subsection{Empirical LTV Gradient Dynamics along Input Trajectories}\label{apx:ltv_empirical}

To quantify the gap between the frozen-time analysis and the full LTV gradient dynamics, we compute the exact state-to-state Jacobians $G^\tau_m = \partial h_{\tau+m}/\partial h_\tau$ of a SpatialRNN along real input trajectories. To this end, we propagate the recurrence equations~\eqref{eq:SpatialRNNfullrecurrence}--\eqref{eq:SpatialRNNgradients} using the activation derivatives $D^\tau_m$ realized by each input sequence. In contrast with the loss gradients in Fig.~\ref{fig:exploding_grads}, this measurement does not depend on the loss, the readout, or the label, and thus directly probes the LTV dynamics whose frozen-time spectrum is controlled by Theorem~\ref{thm:Stable-Neutral-Mixing}.
We consider a SpatialRNN with block sizes $\{128, 8, 8, 8, 8\}$ trained on psMNIST, with $\alpha_1=1$ fixed and learned decay parameters $\alpha_i\in[0.31, 0.46]$ for $i\geq2$ (which reached $96.2\%$ validation accuracy during training), and the same architecture at initialization (with $\alpha_i = 0.2$ for $i\geq 2$). For each model, we evaluate $\|G^\tau_m\|_F$ for $\tau=0$ and $m\leq 783$ along $100$ psMNIST validation sequences in double precision, rescaling the recursion at every step to prevent overflow and underflow. For each sequence, we fit a log-linear regression to $\|G^\tau_m\|_F$ for $m\geq50$, whose slope provides an empirical estimate of the top Lyapunov exponent of the LTV gradient dynamics.

Fig.~\ref{fig:ltv_jacobians} and Table~\ref{tab:ltv} show that the Jacobian norms neither vanish nor explode over the full horizon. For the trained model, the average slope is $7.98\cdot10^{-5}$ per step, corresponding to a change of only ${\approx}6\%$ in the Jacobian norm across $783$ steps, compared with ${\approx}23\%$ at initialization. Training thus moves the model closer to the marginal regime predicted by the frozen-time analysis and reduces the residual drift due to the time-varying dynamics.

\begin{figure}[h]
    \centering
    \includegraphics[width=\linewidth]{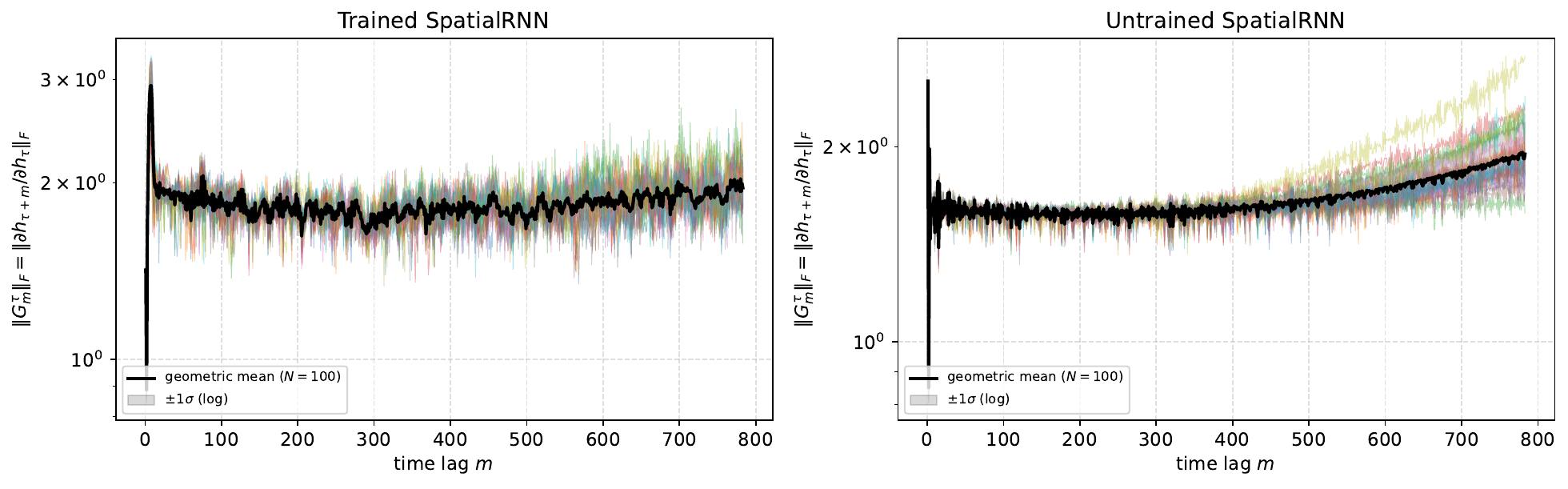}
    \caption{Frobenius norm of the exact state-to-state Jacobians $G^\tau_m=\partial h_{\tau+m}/\partial h_\tau$, propagated via Eqs.~\eqref{eq:SpatialRNNfullrecurrence}--\eqref{eq:SpatialRNNgradients} along $100$ psMNIST validation sequences ($\tau=0$, $T=784$), for a trained SpatialRNN (left) and the same architecture at initialization (right). Thin lines show individual sequences, while the black line and the shaded band show the geometric mean and $\pm1$ standard deviation (in log scale).}
    \label{fig:ltv_jacobians}
\end{figure}

\begin{table}[h]
\centering
\begin{tabular}{lccc}
\toprule
Model & Log-linear slope (per step) & Change over $783$ steps & CV of $\|G^\tau_m\|_F$ \\
\midrule
Trained SpatialRNN   & $(7.98\pm7.75)\cdot10^{-5}$ & ${\approx}6\%$  & $0.071\pm0.010$ \\
Untrained SpatialRNN & $(2.60\pm1.51)\cdot10^{-4}$ & ${\approx}23\%$ & $0.072\pm0.032$ \\
\bottomrule
\end{tabular}
\smallskip
\caption{Empirical LTV gradient statistics over $100$ psMNIST validation sequences (mean $\pm$ standard deviation across sequences). Slopes are fitted for $m\geq50$, and the coefficient of variation (CV) measures the dispersion of $\|G^\tau_m\|_F$ along each sequence.}
\label{tab:ltv}
\end{table}

\section{Symplectic Form Preservation of the SpatialRNN} \label{apx:symplectic}

When the weight matrices are constrained to ensure marginal stability ($\alpha_i=1$), the corresponding gradient dynamics can maintain a symplectic form~\citep{marsden2001discrete}.
%
%
A map is symplectic if it preserves the canonical geometry of phase space (i.e., it conserves a bilinear form that encodes position-momentum interactions) and therefore preserves the phase-space volume.
Unlike dissipative dynamics, symplectic dynamics preserve phase-space volume and are invertible, so information is not irreversibly lost through volume contraction.
Consequently, this property leads to numerically stable and time-reversible evolution, allowing past states to be recovered exactly from future ones (as shown below in Theorem~\ref{thm:state_reconstruction}). Here, we first formalize the condition for symplectic form preservation:

\begin{proposition}[Symplectic preservation of second-order LDEs]\label{prop:symp-form}
Consider the unforced medium dynamics 
\begin{equation}\label{eq:inputless_pde}
    \psi_{t+1} = \overline{W}\psi_t + \overline{\overline{W}}\psi_{t-1},
\end{equation}
and define the canonical position-momentum coordinates $z_t = (q_t, p_t)^T$, where $q_t = \psi_t$ and $p_t = \frac{1}{\Delta t}(\psi_t - \psi_{t-1})$. In these coordinates, the evolution defines a one-step linear map $z_{t+1} = M z_t$. This map is symplectic (i.e., $M^\top J M = J$ for the canonical bilinear form $J$) if and only if
$\overline{\overline{W}} = -\mathbb I$ and $\overline{W}^\top = \overline{W}$.
\end{proposition}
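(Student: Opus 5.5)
The plan is to write the one-step map $M$ explicitly in block form and then expand the identity $M^\top J M = J$ block by block. Each of the three independent block conditions turns out to be a simple matrix identity, and together they are exactly the two conditions in the statement. Throughout I use the canonical form $J=\begin{pmatrix}0&\mathbb I\\-\mathbb I&0\end{pmatrix}$ with $n\times n$ blocks.

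\textbf{Step 1: the map $M$.} Use $\psi_{t-1}=q_t-\Delta t\,p_t$ in the unforced recursion \eqref{eq:inputless_pde}. This gives
$$q_{t+1}=(\overline W+\overline{\overline W})q_t-\Delta t\,\overline{\overline W}p_t .$$
Then $p_{t+1}=(q_{t+1}-q_t)/\Delta t$ gives
$$p_{t+1}=\tfrac{1}{\Delta t}(\overline W+\overline{\overline W}-\mathbb I)q_t-\overline{\overline W}p_t .$$
Write $A:=\overline W+\overline{\overline W}$ and $C:=\overline{\overline W}$. Then $M=\begin{pmatrix}a&b\\c&d\end{pmatrix}$ with blocks
$$a=A,\qquad b=-\Delta t\,C,\qquad c=\tfrac{1}{\Delta t}(A-\mathbb I),\qquad d=-C.$$

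\textbf{Step 2: block expansion of $M^\top J M$.} A direct multiplication gives
$$M^\top J M=\begin{pmatrix}a^\top c-c^\top a & a^\top d-c^\top b\\ b^\top c-d^\top a & b^\top d-d^\top b\end{pmatrix}.$$
The $(2,1)$ block is minus the transpose of the $(1,2)$ block. Hence $M^\top JM=J$ is equivalent to three conditions:
\begin{enumerate}
\item $a^\top c$ is symmetric;
\item $b^\top d$ is symmetric;
\item $a^\top d-c^\top b=\mathbb I$.
\end{enumerate}

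\textbf{Step 3: reading off each condition.} I check them in the order that pins down the matrices fastest.
\begin{itemize}
\item Condition (iii): $a^\top d-c^\top b=-A^\top C+(A-\mathbb I)^\top C=-C$. So (iii) holds if and only if $\overline{\overline W}=-\mathbb I$. This is a necessary condition with no assumption on $\overline W$, and it is the key step.
\item Condition (ii): with $C=-\mathbb I$ we get $b^\top d=\Delta t\,\mathbb I$, which is automatically symmetric.
\item Condition (i): we need $A^\top(A-\mathbb I)=A^\top A-A^\top$ to be symmetric. Since $A^\top A$ is always symmetric, this is equivalent to $A^\top=A$. With $A=\overline W-\mathbb I$, that means $\overline W^\top=\overline W$.
\end{itemize}
Conversely, if $\overline{\overline W}=-\mathbb I$ and $\overline W$ is symmetric, all three conditions hold by the same computations. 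This gives both directions of the equivalence.

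The only real obstacle is bookkeeping: getting the substitution $\psi_{t-1}=q_t-\Delta t\,p_t$ and the block product right. The rest follows immediately once condition (iii) has forced $C=-\mathbb I$.
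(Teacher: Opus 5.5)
Your proposal is correct and follows essentially the same route as the paper: write $M$ explicitly in block form, expand $M^\top J M = J$ blockwise, use the off-diagonal identity to force $\overline{\overline W}=-\mathbb I$, then use the diagonal identity to force $\overline W^\top=\overline W$, with the remaining block automatic. Your observation that the $(2,1)$ block is minus the transpose of the $(1,2)$ block is exactly the paper's remark that its fourth condition is the transpose counterpart of the first.
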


\begin{proof}
The one-step map $M$ is derived from the canonical coordinate definitions:
\begin{align*}
    q_{t+1} &= \psi_{t+1} = \overline{W}q_t + \overline{\overline{W}}(q_t - {\Delta t} p_t) = (\overline{W} + \overline{\overline{W}})q_t - {\Delta t} \overline{\overline{W}} p_t, \\
    p_{t+1} &= {\Delta t}^{-1}(q_{t+1} - q_t) = \frac{1}{\Delta t}(\overline{W} + \overline{\overline{W}} - \mathbb{I})q_t - \overline{\overline{W}} p_t.
\end{align*}
Therefore, the dynamics can be written as a linear map $z_{t+1} = Mz_t$, with the propagator
\begin{equation}
    M = \begin{pmatrix} \overline{W} + \overline{\overline{W}} & -{\Delta t} \overline{\overline{W}} \\ \frac{1}{\Delta t}(\overline{W} + \overline{\overline{W}} - \mathbb{I}) & -\overline{\overline{W}} \end{pmatrix} \equiv \begin{pmatrix} A & B \\ C & D \end{pmatrix}.
\end{equation}
The map is symplectic if $M^\top J M = J = \begin{pmatrix} 0 & \mathbb{I} \\ -\mathbb{I} & 0 \end{pmatrix}$. This requires four block-matrix conditions: 1) $A^\top D - C^\top B = \mathbb{I}$, 2) $A^\top C - C^\top A = 0$, 3) $B^\top D - D^\top B = 0$, and $B^\top C - D^\top A = -\mathbb{I}$.

\smallskip\noindent
\textit{Condition 1}. Substituting the expressions for $(A,B,C,D)$, we have that
    \begin{align*}
    (\overline{W} + \overline{\overline{W}})^\top (-\overline{\overline{W}}) - (\frac{1}{\Delta t}(\overline{W} + \overline{\overline{W}} - \mathbb{I}))^\top (-{\Delta t} \overline{\overline{W}}) 
    &= -(\overline{W}^\top + \overline{\overline{W}}^\top) \overline{\overline{W}} + (\overline{W}^\top + \overline{\overline{W}}^\top - \mathbb{I}) \overline{\overline{W}} \\
    &= (-\overline{W}^\top - \overline{\overline{W}}^\top + \overline{W}^\top + \overline{\overline{W}}^\top - \mathbb{I}) \overline{\overline{W}} \\
    &= -\overline{\overline{W}},
    \end{align*}
    implying $\overline{\overline{W}} = -\mathbb{I}$.

\smallskip\noindent
\textit{Condition 2}. Using $\overline{\overline{W}} = -\mathbb{I}$ from condition 1, the block matrices simplify to $A = \overline{W} - \mathbb{I}$ and $C = \frac{1}{\Delta t}(\overline{W} - 2\mathbb{I})$, yielding the following expression for this condition:
    \begin{align*}
    (\overline W - \mathbb{I})^\top \frac{1}{\Delta t}(\overline{W} - 2\mathbb{I}) &- \frac{1}{\Delta t}(\overline{W} - 2\mathbb{I})^\top (\overline{W} - \mathbb{I}) \\
    &= \frac{1}{\Delta t} \left[ (\overline{W}^\top - \mathbb{I})(\overline{W} - 2\mathbb{I}) - (\overline{W}^\top - 2\mathbb{I})(\overline{W} - \mathbb{I}) \right] \\
    &= \frac{1}{\Delta t} [ -\overline{W}^\top + \overline{W} ] \\&= 0,
    \end{align*}
    implying $\overline{W}^\top = \overline{W}$.

\medskip
Finally, conditions 3 and 4 are automatically satisfied by these constraints (with condition 4 being the transpose counterpart of condition 1).
\end{proof}

\begin{remark}
    The wave equation (presented in Remark~\ref{rmk:stormerverlet}) yields the spatial update matrices $\overline{W} = 2\mathbb{I} + c^2 \Delta t^2 W_\Delta$ and $\overline{\overline{W}} = -\mathbb{I}$. Because the discrete Laplacian $W_\Delta$ is symmetric, $\overline{W}^\top = \overline{W}$, which directly satisfies Proposition~\ref{prop:symp-form} and preserves the symplectic form.
\end{remark}

\subsection{Exact Backward Reconstruction of the SpatialRNN States}
We can now formalize the structural conditions under which the SpatialRNN admits $O(1)$ memory cost during training. In contrast to standard RNNs, where the intermediate states must be stored explicitly for backpropagation, the SpatialRNN enables exact reconstruction of its full trajectory from the final state. This property follows from the second-order dynamics of the medium, which induce a reversible state evolution. When the medium is non-dissipative ($\overline{\overline W}=-\mathbb I$ and $\overline W=\overline W^\top$), the forward dynamics preserve a symplectic structure (Proposition~\ref{prop:symp-form}), implying non-dissipative information flow and time-reversibility. In particular, for blocks with $\alpha_i = 1$, no information is lost over time, while for $\alpha_i<1$ the invertibility of the medium dynamics still permits, in exact arithmetic, the recovery of past states despite dissipative behavior. As a result, intermediate states can be reconstructed by evolving the system backward in time, eliminating the need to cache activations during the forward pass. The following theorem explicitly constructs this backward mapping.

\smallskip
\begin{theorem}[Exact history reconstruction] \label{thm:state_reconstruction}
Suppose the weight matrices satisfy the conditions of Theorem~\ref{thm:Stable-Neutral-Mixing}. Then, given the input sequence $\{x_t\}_{t=1}^T$, the complete trajectory $\{\psi_t, h_t\}_{t=1}^T$ can be exactly reconstructed by iteratively recovering past states starting from the final boundary states $(\psi_T, \psi_{T-1})$.
\end{theorem}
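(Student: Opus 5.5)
The plan is a backward induction on $t$, run from $t=T$ down to $t=1$. At each step we hold the pair $(\psi_t,\psi_{t-1})$ and recover $h_t$ and $\psi_{t-2}$.

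\emph{Recovering the hidden state.} Given $\psi_t$ and the known input $x_t$, the hidden state is determined by the first equation of \eqref{eq:SpatialRNN}:
\begin{equation}
h_t=\Phi(W_x x_t+W_\psi\psi_t+b_h).
\end{equation}
So from $(\psi_T,\psi_{T-1})$ we obtain $h_T$ and $h_{T-1}$ directly.

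\emph{Recovering the older medium state.} Next we invert the medium recurrence \eqref{eq:lde} at time $t$,
\begin{equation}
\overline{\overline W}\psi_{t-2}=\psi_t-\overline W\psi_{t-1}-Bh_{t-1}.
\end{equation}
The right-hand side is fully known: $\psi_t$ and $\psi_{t-1}$ are in hand, and $h_{t-1}$ was computed from $\psi_{t-1}$ in the previous sub-step. To solve for $\psi_{t-2}$ we need $\overline{\overline W}$ to be invertible. Under the hypotheses of Theorem~\ref{thm:Stable-Neutral-Mixing}, $\overline{\overline W}$ is block upper-triangular with diagonal blocks $-\alpha_i\mathbb I_{n_i}$ and $\alpha_i>0$. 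Hence $\det\overline{\overline W}=\prod_i(-\alpha_i)^{n_i}\neq0$, which is the same computation as in Proposition~\ref{prop:det}. Therefore
\begin{equation}
\psi_{t-2}=\overline{\overline W}^{-1}\big(\psi_t-\overline W\psi_{t-1}-Bh_{t-1}\big).
\end{equation}
Because $\overline{\overline W}$ is triangular, this can be computed by back-substitution without forming the inverse.

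\emph{Induction and termination.} Shifting the window to $(\psi_{t-1},\psi_{t-2})$ and repeating gives the full trajectory $\{\psi_t,h_t\}_{t=1}^T$. Each step uses only the current pair plus the inputs, so the memory cost is $\mathcal O(1)$. Consistency at the boundary can be checked against the known values $\psi_0=\psi_{-1}=0$ and $h_0=0$, although this check is not needed for reconstruction.

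The main obstacle is not really an analytical one. The essential point is that $h_{t-1}$ must be computable from quantities already recovered before we solve for $\psi_{t-2}$. This works because the forcing at time $t$ is $Bh_{t-1}$, and $h_{t-1}$ depends on $\psi_{t-1}$, which is already known. We never need to invert $\Phi$ or $B$: only the invertibility of $\overline{\overline W}$ is used, and it follows from the assumption $\alpha_i>0$. The remaining conditions of Theorem~\ref{thm:Stable-Neutral-Mixing}, such as the norm bounds and symmetry, are not needed for exactness in exact arithmetic. They matter only for numerical stability, which the theorem does not claim.
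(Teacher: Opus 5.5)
Your proposal is correct and follows the paper's proof essentially step for step. At each backward step you recompute $h_{t-1}=\Phi(W_x x_{t-1}+W_\psi\psi_{t-1}+b_h)$ from the known pair, then solve $\overline{\overline W}\psi_{t-2}=\psi_t-\overline W\psi_{t-1}-Bh_{t-1}$ using the invertibility of $\overline{\overline W}$. You justify that invertibility more explicitly than the paper does, through the triangular structure with diagonal blocks $-\alpha_i\mathbb I_{n_i}$.

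One small nit: the sweep only needs to run down to $t=2$, as in the paper. At $t=1$, recovering $\psi_{-1}$ would need $h_0$, which is an initialization rather than an output of $\Phi$. This is harmless, since $\psi_{-1}$ is not part of the trajectory being reconstructed.
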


\begin{proof}
Assume that during the backward pass at step $t$, the states $\psi_t$ and $\psi_{t-1}$ are available in memory, along with the given input sequence $\{x_t\}_{t=1}^T$. We show that this information suffices to uniquely reconstruct the preceding states. First, we note that the hidden state at time $t-1$ is perfectly reconstructed from the forward update equation:
\begin{equation}
    h_{t-1} = \Phi(W_x x_{t-1} + W_\psi \psi_{t-1} + b_h)
\end{equation}

Next, isolating $\psi_{t-2}$ in the medium dynamics yields
\begin{equation}
    \overline{\overline{W}} \psi_{t-2} = \psi_t - \overline{W} \psi_{t-1} - B h_{t-1}
\end{equation}
Under the assumption of Theorem~\ref{thm:Stable-Neutral-Mixing}, we have that $\exists \ \overline{\overline{W}}^{-1}$. Applying its inverse gives a unique solution for the prior medium state:
\begin{equation}
    \psi_{t-2} = \overline{\overline{W}}^{-1} \left(\psi_t - \overline{W} \psi_{t-1} - B h_{t-1}\right).
\end{equation}
This establishes that the pair $(\psi_t, \psi_{t-1})$ uniquely determines $(\psi_{t-1},\psi_{t-2})$, and hence the state at time $t-2$.
By recursively applying this procedure from $t = T$ down to $t=2$, we recover the complete forward sequence $\{\psi_t, h_t\}_{t=1}^T$, without requiring storage of intermediate states.
\end{proof}

\subsection{Numerical Precision of the Reconstruction}
Theorem~\ref{thm:state_reconstruction} holds in exact arithmetic. In finite precision, round-off is propagated by the backward recursion, whose conditioning is governed by the inverse medium dynamics. Under the conditions of Theorem~\ref{thm:Stable-Neutral-Mixing} the modes of the medium
have modulus $\sqrt{\alpha_i}$, so the backward recursion, which inverts them, amplifies round-off by $\alpha_i^{-1/2}$ per step. For a tolerance $\varepsilon_{\mathrm{tol}}$ and machine epsilon $\varepsilon_{\mathrm{mach}}$, the useful horizon is therefore
\begin{equation}\label{eq:reversible-horizon}
  m_{\max}\;\approx\;
  \frac{\log\!\big(\varepsilon_{\mathrm{tol}}/\varepsilon_{\mathrm{mach}}\big)}
       {\log\!\big(\alpha_{\min}^{-1/2}\big)},
  \qquad \alpha_{\min}=\min_i \alpha_i .
\end{equation}
This is a property of the recursion rather than of the data, so we isolate it in a controlled setting: we compare the Jacobians $G_m=\partial h_{T-1}/\partial h_{T-1-m}$ obtained from the stored forward states with those obtained from the backward-reconstructed states, for randomly initialized models driven by $20$ synthetic scalar input sequences of length $T=784$ (Fig.~\ref{fig:reversibility}). In the non-dissipative case ($\alpha_i=1\;\forall i$) the recursion has all its modes on the unit circle, no amplification occurs, and the relative error stays below $10^{-5}$ in single and $10^{-13}$ in double precision over the entire horizon. With dissipative blocks ($\alpha=[1,0.5,0.5,0.5]$) the error grows exponentially at a measured rate of $1.42$ per step, matching the $1/\sqrt{0.5}=1.414$ predicted by the modal analysis, and exceeds $10^{-3}$ after $44$ steps in single and $101$ steps in double precision, in agreement with~\eqref{eq:reversible-horizon}. Because $\overline{W}$ and $\overline{\overline{W}}$ are block \emph{upper}-triangular, the marginally stable block is not shielded: it reads from the dissipative blocks and inherits their divergence. Consequently, $\mathcal{O}(1)$-memory BPTT is practical only when all blocks are non-dissipative; models with $\alpha_i<1$ require caching, or checkpointing every $K<m_{\max}$ steps, which brings the memory use to $\mathcal{O}(T/K)$, and could be used for parallelization.

\begin{figure}[h]
    \centering
    \begin{minipage}{0.49\linewidth}
        \centering
        {\small (\textbf{a}) Single block, $\alpha_1=1$}\\[2pt]
        \includegraphics[width=\linewidth,trim={0 0 0 69bp},clip]{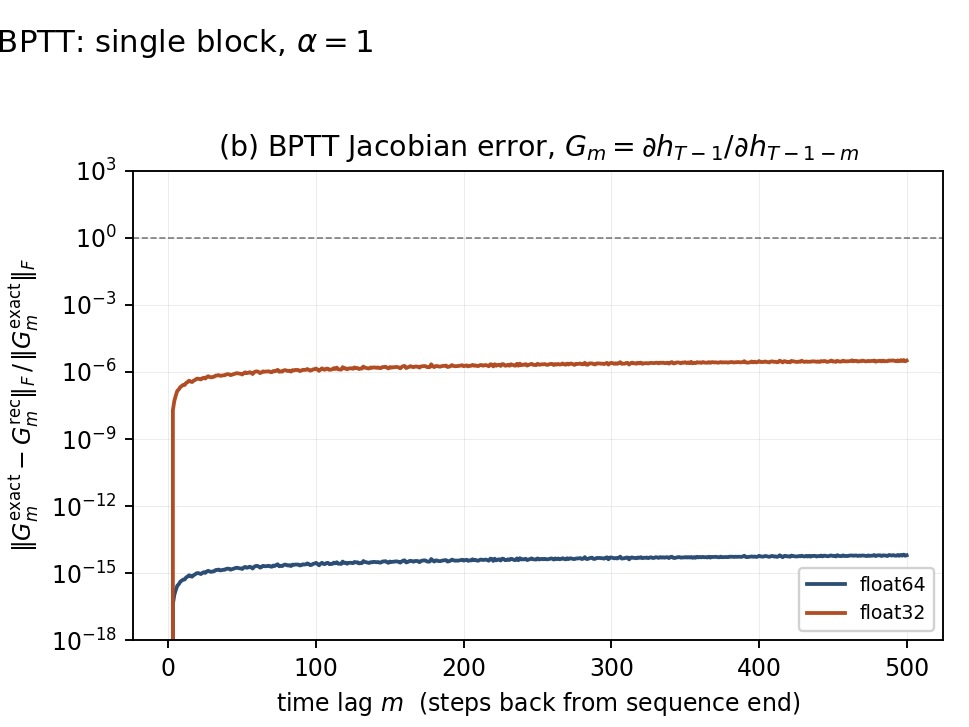}
    \end{minipage}\hfill
    \begin{minipage}{0.49\linewidth}
        \centering
        {\small (\textbf{b}) Four blocks, $\alpha=[1,0.5,0.5,0.5]$}\\[2pt]
        \includegraphics[width=\linewidth,trim={0 0 0 69bp},clip]{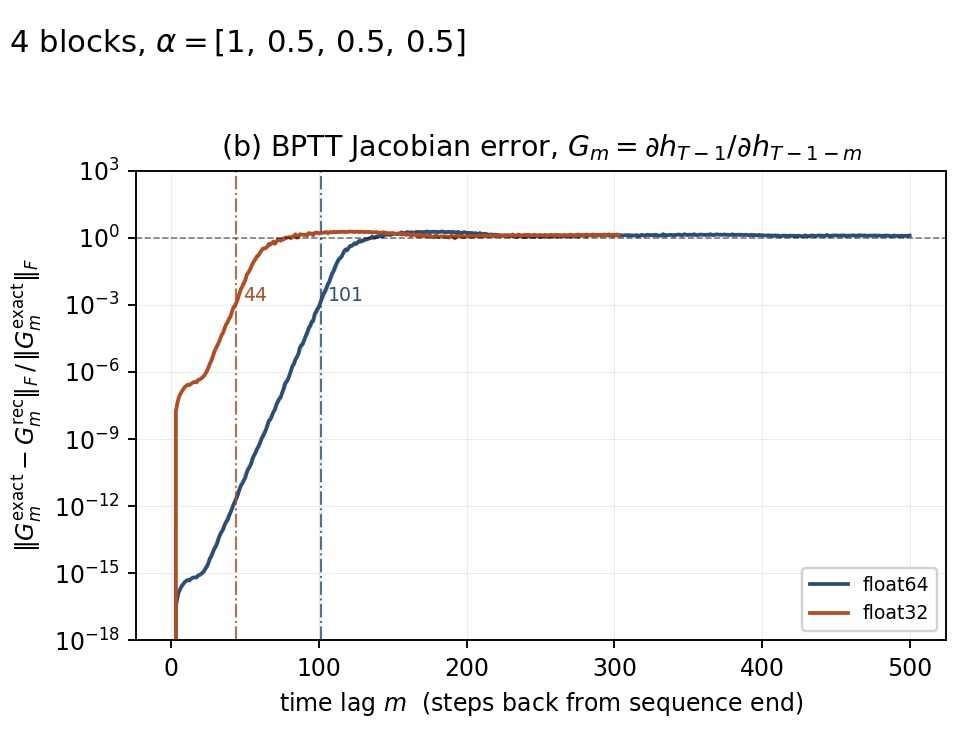}
    \end{minipage}
    \caption{Relative Frobenius error
$\|G_m^{\mathrm{exact}}-G_m^{\mathrm{rec}}\|_F/\|G_m^{\mathrm{exact}}\|_F$
between the exact Jacobians $G_m=\partial h_{T-1}/\partial h_{T-1-m}$, computed from the stored forward states, and those computed from the backward-reconstructed states, as a function of the time lag $m$, in single (float32) and double (float64) precision. \textbf{(a)} Non-dissipative SpatialRNN (single block, $n=40$, $\alpha_1=1$): the error remains at machine precision over the entire horizon, with no exponential accumulation. \textbf{(b)} SpatialRNN with dissipative blocks (block sizes $\{22,6,6,6\}$, $\alpha=[1,0.5,0.5,0.5]$): round-off is amplified exponentially, at a measured rate of $1.42$ per backward step, matching the predicted $\alpha_{\min}^{-1/2}=1.414$. 
Vertical lines mark the lags at which the median error first exceeds $10^{-3}$, yielding $m=44$ in single  and $m=101$ in double precision.
Since the reconstruction starts from $(\psi_{T-1},\psi_{T-2})$ and proceeds backward, the state at time $t$ is recovered after $T-1-t$ backward steps, so $m$ is also the depth of the backward recursion. Curves show the median over $20$ input sequences of length $T=784$ and shaded bands the $10$--$90$ percentile range; the same effective matrices are used in both precisions, so the two curves differ only in the arithmetic precision of the recursions. Both models are randomly initialized, with the diagonal blocks rescaled so that condition~2 of Theorem~\ref{thm:Stable-Neutral-Mixing} holds. }
    \label{fig:reversibility}
\end{figure}

\section{Experimental Details}\label{apx:exp_det}

In this section, we detail the datasets, training procedures, and benchmarking used in Section~\ref{sec:experiments}, along with representative data samples. All code for reproduction is available at: [\url{anonymous.4open.science/r/SpatialRNN-code-748C}]
%
%

\subsection{SpatialRNN implementation details}

\textbf{Activation function and decay parameters.}
To implement the SpatialRNN in practice, we set the activation function to $\Phi = \tanh$. In all experiments, we enforce the structural and spectral conditions outlined in Theorem~\ref{thm:Stable-Neutral-Mixing} to guarantee the retention of long-term dependencies. We thus set $\alpha_1 = 1$ (typically corresponding to the largest block), while treating the remaining decay parameters $\{\alpha_2,\alpha_3,\ldots\}$ as trainable.

\textbf{Masking and initialization.}
To satisfy the block upper-triangular constraints of Theorem~\ref{thm:Stable-Neutral-Mixing}, we apply static binary masks during each forward pass. We use block-diagonal masks for $B$, strictly diagonal mask for $\Sigma$, set $W_\psi = \Sigma B^\top$, and use block triangular masks for both $\bar{W}$ and $\bar{\bar{W}}$.

The input and output projection matrices ($W_x, W_y$) are initialized using Xavier uniform distribution. For the medium dynamics, the block-diagonal components of $\bar{W}$ are initialized using orthogonal initialization (gain of $0.5$), while $B$ and $\Sigma$ are initialized as the identity matrix. The upper-triangular interaction blocks are initialized with Xavier uniform distribution with a reduced gain of $0.1$ to not dominate the intra-block dynamics.

\textbf{Spectral Bounding.} To strictly satisfy the stability bounds at every iteration, we dynamically project the weight matrices using their spectral norms. Following Theorem~\ref{thm:Stable-Neutral-Mixing}, we apply a multiplicative scaling factor $\gamma_i$ to the blocks to constrain the gradient spectrum:
\begin{equation}\label{eq:common_bound}
    \gamma_i = \min\left(1, \frac{2\sqrt{\alpha_i} - \epsilon}{\|\bar{W}_i\|_2 + \|B_i\|_2\|W_{\psi, i}\|_2}\right),
\end{equation}
where $\epsilon = 10^{-4}$ ensures the inequality remains strict. 
Alternatively, we can also use a decoupled bounding scheme where the norms are independently constrained via separate scaling factors (e.g., $\|\bar{W}_i\|_2 \le c_W$ and $\|B_i\|_2, \|W_{\psi, i}\|_2 \le c_B$). Such constraints tend to yield improved performance, as it more directly enforces\textemdash or heuristically approximates\textemdash the LTV stability conditions derived in Theorem~\ref{thm:guas}.

\subsection{HORNN implementation details}
The $d$-order HORNN with delay matrices $\{W_k\}_{k=1}^d$ (as presented in Eq.~\eqref{eq:HORNN}) can be generalized to use a sparse set of delays $\{W_k\}_\mathcal{T}$ for $\mathcal{T} \subset \{1,\ldots,d\}$. This generalization leads to the update equation:
\begin{equation}
    h_t = \Phi\left(W_x x_t + \sum_{k \in \mathcal{T}} W_k h_{t-k} + b_h\right).
\end{equation}
 Such model can more efficiently evaluate long-range memory retention without the parameter overhead of dense delays. The effective temporal receptive field of the model is bounded by $d_{\max} = \max(\mathcal{T})$.

 \textbf{Implementation and initialization.}
Consistent with the SpatialRNN architecture, we take $\Phi = \tanh$. To ensure stable variance scaling across the sparse temporal connections, the input projection $W_x$, output projection $W_y$, and all sparse recurrence matrices $\{W_k\}_{k \in \mathcal{T}}$ are initialized using a Xavier uniform distribution. The hidden state bias $b_h$ and output bias $b_y$ are strictly initialized to zero.

\subsection{Copy Task}
\textbf{Dataset generation.}
The copy task evaluates the model's ability to perfectly retain information over extended temporal horizons. We define a token dimension $D=4$ and a vocabulary of $V = 2^D = 16$ tokens. The data tokens are embedded as vectors in $\{-1, 1\}^D$.

The input sequence $X$ of total length $T$ is constructed as a concatenation of three phases: $X = [X_{\text{data}}, X_{\text{dist}}, X_{\text{trig}}]$, where $X_{\text{data}} \in \{-1, 1\}^{D \times L}$ consists of $L=5$ tokens sampled uniformly at random from the data tokens, $X_{\text{dist}} \in \{0\}^{D \times T_{\text{dist}}}$ consists of $T_{\text{dist}}$ distractor tokens padded with zeros, and $X_{\text{trig}} \in \{1\}^{D \times L}$ consists of $L=5$ trigger tokens padded with ones.

\textbf{Training and evaluation.}
Models are trained to minimize the cross-entropy loss over the final $L$ retrieval steps, comparing the generated outputs (after the whole distractor sequence has been fed into the model) to the initial $L$ data tokens $X_{\text{data}}$. The outputs of the models for the first $L+T_\text{dist}$ are discarded. At each epoch, we generate $N_{\text{train}} = 3000$ training sequences, $N_{\text{test}} = 800$ testing and $N_{\text{val}} = 800$ validation sequences. We take a batch size of $64$, and use the Adam optimizer with a learning rate of $0.002$. Training proceeds for a maximum of $3000$ epochs, using an early stopping patience of $300$ epochs.

\textbf{Model details.}
For the SpatialRNN models, the temporal decay parameters are initialized to $\alpha_i = 0.8$ for all blocks $i$, except for the first block which is strictly fixed to $\alpha_1 = 1.0$ throughout training to ensure at least one non-dissipative, marginally stable memory stream. We performed a hyperparameter search over various block size configurations. Table~\ref{tab:copy_task_hyperparams} details the specific sizes evaluated, their corresponding parameter counts, and the resulting accuracy on the full $T=1024$ sequence length. We bound the matrix norms by $\|\bar{W}_i\|_2 \le 1.7$ and $\|B_i\|_2, \|W_{\psi, i}\|_2 \le 0.2$.
\begin{table}[h]
\centering
\begin{tabular}{lccc}
\toprule
Model & Neurons & Parameters & Mean Accuracy(\%) $\pm$ std. \\
\midrule
SpatialRNN {[32, 8, 8, 8]}    & 56 & 6,815  & 99.04$\pm$ 0.07 \\
SpatialRNN {[8, 8, 8, 8]}     & 32 & 2,915  & 96.90$\pm$ 4.34 \\
SpatialRNN {[4, 4, 4, 4]}     & 16 & 1,179  & 69.33$\pm$ 16.54 \\
\midrule
HORNN $(\mathcal T=[1, 5, 10, 15])$       & 64 & 17,744 & 13.39$\pm$ 3.07 \\
HORNN $(\mathcal T=[1, 5, 10])$         & 64 & 13,648 & 14.37$\pm$ 2.72 \\
HORNN $(\mathcal T=[1, 5])$          & 64 & 9,552  & 11.05 $\pm$ 1.20 \\
Standard RNN ($d=1$) & 64 & 5,456  & 12.81$\pm$ 1.72 \\
\bottomrule
\end{tabular}
\smallskip
\caption{Copy task architecture configurations and performance. Mean accuracy is evaluated at the maximum sequence length $T=1024$. Average and standard deviation over 10 runs.}
\label{tab:copy_task_hyperparams}
\end{table}
\smallskip
\subsection{Permuted Sequential MNIST (psMNIST)}

\textbf{Dataset and preprocessing.}
The psMNIST task takes MNIST images and flattens them into sequences of length $T = 784$ with feature dimension $D_{\text{in}} = 1$. To ablate local spatial structure, a fixed random permutation $\pi \in S_{784}$ (seed 42) is applied to the time steps of all sequences, isolating the task's reliance on long-range temporal dependencies. The vector values are then normalized via $x \mapsto \frac{x - \mu}{\sigma}$ ($\mu = 0.1307, \sigma = 0.3081$).

\textbf{Training and hyperparameters.}
The images are fed to the model sequentially, and after all the sequence has been processed, a single integer (encoded as the maximum one-hot vector) $D_{\text{out}} = 10$ is generated. Models are trained to minimize cross-entropy loss over $N_{\text{runs}} = 3$ independent initializations. Optimization utilizes Adam with a learning rate of $10^{-4}$ and batch size of $128$. Training proceeds for a maximum of $1000$ epochs with an early stopping patience of $300$ epochs. We use the standard $N_{\text{train}} = 60,000$ training images and  $N_{\text{test}} = 10,000$ testing images. 

Temporal decay parameters are initialized to $\alpha_i = 0.8$ for all blocks $i$, except for the first block which is strictly fixed to $\alpha_1 = 1.0$. We bound the matrix norms by $\|\bar{W}_i\|_2 \le 1.7$ and $\|B_i\|_2, \|W_{\psi, i}\|_2 \le 0.2$ and clip the global gradient norm to 1.0. To produce the results for Table~\ref{tab:psmnist_results}, we performed a simple hyperparameter search over various block size configurations. Table \ref{tab:psmnist_sweep} details the specific sizes evaluated, their corresponding parameter counts, and the resulting mean accuracy across 3 independent runs, ordered by performance.

\begin{table}[h]
\centering
\begin{tabular}{lccc}
\toprule
Model & Neurons & Parameters & Mean Accuracy \\
\midrule
SpatialRNN [128, 8, 8, 8] & 152 & 36,497 & $96.34 \pm 0.09$ \\
SpatialRNN [64, 8, 8, 8] & 88 & 12,849 & $95.49 \pm 0.22$ \\
SpatialRNN [32, 8, 8, 8] & 56 & 5,633 & $94.37 \pm 0.13$ \\
\bottomrule
\end{tabular}
\smallskip
\caption{psMNIST task architecture configurations and performance. Mean accuracy is evaluated over 3 runs, ordered by mean accuracy.}
\label{tab:psmnist_sweep}
\end{table}

\textbf{Compared models.}
We evaluate our model against state-of-the-art RNN architectures designed for long-range dependencies. When previous methods (e.g., NWM, coRNN) have been evaluated multiple times in the literature, we prioritize results derived from multi-run aggregations to ensure statistical robustness. Additionally, to guarantee a fair comparison, we restrict all baselines to single-layer models trained under standard optimization protocols. By omitting complex learning rate schedules and regularizers, such as dropout~\citep{srivastava2014dropout}, batch normalization~\citep{ioffe2015batch} or step decay~\citep{he2016deep}, we isolate the intrinsic memory capacity of the recurrent mechanism from the performance gains typically afforded by increased network depth or training enhancements. 

\subsection{Noise Padded CIFAR-10 (npCIFAR10)}

\textbf{Dataset and preprocessing.}
The npCIFAR10 task evaluates the model's ability to extract spatial features and retain them over prolonged noisy horizons. We use the standard CIFAR-10 dataset ($50,000$ training and $10,000$ testing images on 10 classes). Each image is normalized using the channel-wise means $(0.4914, 0.4822, 0.4465)$ and deviations $(0.2023, 0.1994, 0.2010)$, so the entries are scaled approximately to $[-1, 1]$.

The normalized $3 \times 32 \times 32$ images are then permuted and reshaped into sequences of length $L=32$ with feature dimension $D_{in} = 32 \times 3 = 96$. In this representation, each timestep $t \in \{1, \dots, 32\}$ corresponds to an image row across all color channels. To introduce long-term memory requirement, we append random vectors following a standard normal distribution $\mathcal{N}(0, I_{96})$ for the remaining $T - 32$ timesteps, yielding a final sequence length $T$.

\textbf{Training and hyperparameters.}
The models' first $T-1$ outputs are ignored, and the final output $y_T \in \{0,1\}^{10}$ (after processing the complete input sequence of length $T$) is compared to the image class using the standard cross-entropy loss. While the primary benchmark results (Table~\ref{tab:npcifar10_results}) evaluate the standard $T=1000$ regime, we also perform an ablation across varying sequence lengths $T \in \{50, 100, 200, 500, 1000\}$ to show that the amount of noise distractors does not affect the task performance, but the limitation comes from the classification performance of the model.
Models are trained using the Adam optimizer with a learning rate of $10^{-4}$, global gradient clipping of 1.0 and a batch size of $256$. Training is executed for a maximum of $3000$ epochs across $N_{runs}=3$ independent initializations. We employ an early stopping patience of $300$ epochs based on validation performance.

\textbf{Model details.}
For the SpatialRNN models, the temporal decay parameters are initialized to $\alpha_i = 0.8$ for all blocks $i$, except for the first block which is strictly fixed to $\alpha_1 = 1.0$ throughout training to ensure at least one non-dissipative, marginally stable memory stream. The specific model configurations used for Table~\ref{tab:npcifar10_results} consist of block sizes $\{128, 8, 8, 8\}$ for the SpatialRNN of 152 neurons (51k parameters), and block sizes $\{64, 8, 8, 8\}$ for the SpatialRNN of 88 neurons, resulting in 21k parameters. We performed a hyperparameter search over various block size configurations. Table~\ref{tab:npcifar10_sweep} details the specific sizes evaluated, their corresponding parameter counts, and the resulting maximum accuracy (over 3 runs) on the full $T=1000$ sequence length. We bound the matrix norms by $\|\bar{W}_i\|_2 \le 1.7$ and $\|B_i\|_2, \|W_{\psi, i}\|_2 \le 0.2$.

\begin{table}[h]
\centering
\begin{tabular}{lccc}
\toprule
Model & Neurons & Parameters & Max Accuracy \\
\midrule
SpatialRNN [128, 8, 8, 8] & 152 & 50,937 & 57.8 \\
SpatialRNN [64, 8, 8, 8] & 88 & 21,209 & 57.2 \\
SpatialRNN [32, 8, 8, 8] & 56 & 10,953 & 54.1 \\
SpatialRNN [16, 8, 8, 8] & 40 & 6,977 & 50.8 \\
SpatialRNN [8, 8, 8, 8] & 32 & 5,277 & 45.6 \\
\bottomrule
\end{tabular}
\smallskip
\caption{npCIFAR10 task architecture configurations and performance. Maximum accuracy is evaluated at the sequence length $T=1000$ over 3 runs.}
\label{tab:npcifar10_sweep}
\end{table}

\textbf{Compared models.}
To appropriately benchmark the SpatialRNN, we select a strong suite of competitors previously evaluated on the npCIFAR10 task. These include traditional gated architectures (LSTM, GRU) to demonstrate the breakdown of standard gating mechanisms over long sequences, as well as state-of-the-art continuous-time and oscillatory models (e.g., coRNN, LEM, Lipschitz RNN) that specifically target the vanishing gradient problem. To ensure a rigorous and equivalent comparison of intrinsic memory capacity, all evaluated baseline models are restricted to single-layer architectures.

\subsection{Hardware and Compute}

All experiments were implemented in PyTorch and executed on a private workstation equipped with two AMD EPYC 9534 64-core processors (total of 128 physical CPU cores and 256 threads) and 755 GB RAM. Model training and evaluation were performed on CPU resources only; no GPU acceleration or multi-GPU model parallelism was used throughout.  The reported experiments are computationally lightweight relative to large-scale sequence-modeling workloads. The copy-task experiments used synthetically generated sequences and small recurrent models, with the largest reported SpatialRNN architecture containing 6,815 parameters and the largest HORNN architecture containing 17,744 parameters. The psMNIST (resp. npCIFAR) experiments used single-layer SpatialRNNs with up to 36,497 (resp.~50,937) parameters, trained for at most 1000 (resp. 3000) epochs with early stopping over three independent initializations.
The copy-task experiments accounted for approximately 1{,}600 CPU-thread-hours, including sweeps over sequence length and recurrent baselines. The psMNIST experiment required approximately 1{,}600 CPU-thread-hours across the three reported initializations and model-size configurations. The npCIFAR10 experiments required approximately 3{,}200 CPU-thread-hours across the three reported initializations and sequence-length ablations. 
Overall, using up to 20 hardware threads for each experiment type, we estimate that the total compute required for the final reported experiments was approximately two weeks on the reported CPU workstation.

\end{document}